\newtheorem{theorem}{Theorem}[section]
\newtheorem{lemma}[theorem]{Lemma}
\newtheorem{definition}[theorem]{Definition}
\newtheorem{assumption}[theorem]{Assumption}
\newtheorem*{remark*}{Remark}
\newcommand{\block}[1]{
  \underbrace{\begin{matrix}1 & \cdots & 1\end{matrix}}_{#1}
}
\title{Factor Analysis on Citations, Using a Combined Latent and Logistic Regression Model}
\author{
    Namjoon Suh
    \thanks{Namjoon Suh is with School of Industrial and Systems Engineering, Georgia Institute of Technology, 755 Ferst Dr, Atlanta, GA, USA. E-mail : namjsuh@gatech.edu } \quad
     Xiaoming Huo 
    \thanks{Xiaoming Huo is with  with School of Industrial and Systems Engineering, Georgia Institute of Technology, 755 Ferst Dr, Atlanta, GA, USA. E-mail : huo@gatech.edu} \quad
     Eric Heim 
    \thanks{Eric Heim is with Software Engineering Institute, Carnegie Mellon University, Pittsburgh, PA, USA.} \quad
     Lee Seversky 
    \thanks{Lee Seversky is with Information System Division (AFRL/RIS), the Air Force Research Laboratory, Department of the Air Force, Air Force,  AFRL/RIK - Rome, 26 ElectronicParkway, Rome, NewYork, USA.}
}
\date{\vspace{-8ex}}
\begin{document}
\begin{titlingpage}
\maketitle

\begin{abstract}
We propose a combined model, which integrates the latent factor model and the logistic regression model, for the citation network.
It is noticed that neither a latent factor model nor a logistic regression model alone is sufficient to capture the structure of the data.
The proposed model has a latent (i.e., factor analysis) model to represents the main technological trends (a.k.a., factors), and adds a sparse component that captures the remaining ad-hoc dependence.
Parameter estimation is carried out through the construction of a joint-likelihood function of edges and properly chosen penalty terms.
The convexity of the objective function allows us to develop an efficient algorithm, while the penalty terms push towards a low-dimensional latent component and a sparse graphical structure.
Simulation results show that the proposed method works well in practical situations.
The proposed method has been applied to a real application, which contains a citation network of statisticians (Ji and Jin, 2016 \cite{ji2016coauthorship}).
Some interesting findings are reported.
\end{abstract}

\begin{keywords}
    Citation network, matrix decomposition, latent variable model, logistic regression model, convex optimization, alternating direction method of multiplier
\end{keywords}


\section{Introduction}
\label{sec:intro}
We study a citation network, where each node (i.e., item) can be a technical report or a publication.
A node may cite another node.
Associated with a pair of nodes $i$ and $j$, we denote a binary random variable $X_{ij}$, where $1 \le i,j \le n$ and $n$ is the total number of nodes.
We have $X_{ij}=1$ if and only if either node $i$ cites node $j$ or vice versa; otherwise $X_{ij}=0$.
For each node $i$, we assume that there is an associated binary vector $f_{i} \in \mathbb{R}^K$, such that the $k$th entry of $f_{i}$, $f_{ik} = 1$, if and only if node $i$ is related to topic (i.e., factor) $k, 1\le k \le K$.
Here $K$ is the total number of underlying topics (i.e., factors, or trends).
We assume a logistic model for $X_{ij}$'s: for $1\le i,j \le n$,

\begin{equation}
\label{eq:logistic01}
\mathbb{P}(X_{ij}=1) = \frac{e^{\alpha + f_i^T D f_j }}{1 + e^{\alpha + f_i^T D f_j }},
\end{equation}
where $\alpha \in \mathbb{R}$ is a parameter and matrix $D \in \mathbb{R}^{K \times K}$ is a diagonal matrix: $D = \mbox{diag}\{d_{1},d_{2},\ldots,d_{K}\}$.
We assume $d_{i} > 0$ for $1 \le i \le K$.
Another way to put \eqref{eq:logistic01} is
\begin{equation}
\label{eq:logistic02}
\mathbb{P}(X_{ij}=1) = \frac{\exp\left(\alpha + \sum_{k=1}^K f_{ik}f_{jk}d_{k} \right)}{1 + \exp\left(\alpha + \sum_{k=1}^K f_{ik}f_{jk}d_{k} \right)}.
\end{equation}
A justification of the above model is that when both node $i$ and node $j$ are related to topic $k$, they have a higher chance to cite one way or the other.
We have assumed a common strength coefficient $d_k$ ($1\le k \le K$) for factor $k$, despite different nodes.
We denote a matrix $F = \{f_1, f_2, \ldots, f_n\} \in \mathbb{R}^{K \times n}$.
Each column $i$ in matrix $F$ contains the factor loadings associated with the node $i$ ($1\le i \le n$).
Given the diagonal matrix $D$ and the factor loading matrix $F$, we assume that $X_{ij}$'s are independent; therefore we have the total conditional probability function as follows:
\begin{equation}
\label{eq:logistic03}
\mathbb{P}(\{X_{ij}, 1\le i,j \le n\})
= \prod_{1\le i<j \le n} \mathbb{P}(X_{ij})
= \prod_{1\le i<j \le n}  \frac{e^{X_{ij}(\alpha + f_i^T D f_j) }}{1 + e^{\alpha + f_i^T D f_j }},
\end{equation}
where $\mathbb{P}(X_{ij})$ is given in \eqref{eq:logistic02}.
The last equation holds because $X_{ij}$ only takes binary (i.e., $0$ or $1$) values.
Recall that the dot product of two matrices with same dimensionality, $A,B\in \mathbb{R}^{a \times b}$, is defined as $A\bullet B=\mbox{trace}(A^T B) = \sum_{i=1}^a\sum_{j=1}^b a_{ij}b_{ij}$.
The above \eqref{eq:logistic03} can be further rewritten as
\begin{equation}
\label{eq:logistic04}
\mathbb{P}(\{X_{ij}, 1\le i,j \le n\})
= \frac{\exp\left(\alpha \sum_{1\le i< j\le n}X_{ij} +\frac{1}{2} X \bullet (F^T D F)\right)}{\prod_{1\le i<j \le n}  1 + e^{\alpha + f_i^T D f_j }},
\end{equation}
where we assume $X_{ii}=0$ for all $i$ ($1\le i \le n$) and $X_{ij} = X_{ji}$ for all $i$ and $j$ ($1\le i,j \le n$), i.e., the matrix $X$ is symmetric.
The above delivers a factor analysis model.
Various linear and nonlinear latent variable models have been studied extensively in the literature (e.g., \cite{joreskog1969general, mcdonald2014factor,lord2008statistical, rasch1980probabilistic, harman1960modern, joreskog1970general}).

Our work is motivated from a recent work named {\emph{Fused Latent and Graphical (FLaG) model} (Chen et al, 2016, \cite{chen2016fused}).
They assume that majority of variation of responses can be accounted by low dimensional latent vector, and remaining dependent structure of responses can be explained by sparse graphical structure.
Thus, the resulting model contains a low-dimensional latent vector and a sparse conditional graph.
Their key idea is to separate these two dependent structures so that they can facilitate the statistical inference.
In our model, we also assume that there exist two dependent structures among citation edges in a network.
A low-dimensional version of the aforementioned latent vector model is largely correct and majority of the citations among the nodes are induced by these common latent vectors $f_{i}$'s (with weight coefficients $d_{i}$'s).
There is still a small remainder due to the sparse graphical component.

Though it may seem similar to Chen et al \cite{chen2016fused}, we work on a different model formulation in several aspects.
We summarize the differences as follows.
\begin{enumerate}
\item FLaG is built to analyze the Eysenck's Personality Questionnaire that consists of items designed to measure Psychoticism, Extraversion, and Neuroticism.
    So there are $p$ questions that need to be answered, and each questions fall into above three categories.
    If there are $n$ respondents to questions, they have $n$ independent data generated from the same distribution.
    In our case, the observed citation network can be thought of as one realization of a random graph.

\item In FLaG model, a collection of binary responses for each question in the questionnaire follows a joint distribution, which is a combination of the Item Response Theory (IRT) model and the Ising model.
    We model the citation edges among papers as random variables, whose dependent structure is characterized by the combination of the Latent Factor Analysis model and the Sparse Graphical model.

\item FLaG approximates the original likelihood through constructing pseudo-likelihood function by taking advantage of conditional independence among the nodes.
    In our model, likelihood function is directly accessible due to the conditional independence among edges.
\end{enumerate}

The proposed modeling framework is also related with the analysis of decomposing a matrix into low-rank and sparse components (\cite{agarwal2012noisy,candes2011robust,chandrasekaran2010latent, zhou2010stable}).
Specifically, paper \cite{chandrasekaran2010latent} studies statistical inference of a multivariate Gaussian model whose precision matrix admits the form of a low-rank matrix plus a sparse matrix.
The inference and optimization of the current model are different from the aforementioned cases.
We will construct a regularized-likelihood function, based on which estimator will be proposed for simultaneous model selection and parameter estimation.
The objective function in the optimization problem for the regularized estimator is convex, for which we will develop an efficient algorithm through the alternating direction method of multiplier (ADMM, \cite{boyd2011distributed, gabay1975dual, glowinski1975solution}).

The rest of the paper is organized as follows.
In Section \ref{sec:model-form}, we will give a presentation on how to build a model, which can encode both the latent dependent structure due to the common topics and the remaining sparse ad-hoc dependent structure.
In Section \ref{sec:estimate}, we will discuss the assumptions in our model and the penalization on the likelihood function, which is constructed in Section \ref{sec:model-form}.
In Section \ref{sec:theorem}, we provide a non-asymptotic error bound of the estimator.
Section \ref{sec:compute} gives the detailed procedure on how to compute the estimator of the optimization problem.
In Section \ref{sec:Num-anal}, we will present simple numerical experiments on synthetic data, as well as an application of our model on a real citation network of statisticians.
We finally conclude this work in Section \ref{sec:Con} with several open questions and some possible directions of future research.

\section{Model Formulation}
\label{sec:model-form}

Recall the following graphical model that was established in \eqref{eq:logistic04}, which is essentially a factor model (latent variable model):
$$
\mathbb{P}\left(\{X_{ij}, 1\le i,j \le n\}\right)
= \frac{\exp\left(\alpha \sum_{1\le i< j\le n}X_{ij} +\frac{1}{2} X \bullet (F^T D F)\right)}{\prod_{1\le i<j \le n}  1 + e^{\alpha + f_i^T D f_j }},
$$
where
$X_{ij}, 1\le i,j \le n$, are binary random variables indicating either node $i$ cites node $j$ or vice versa,
matrix $X = \{X_{ij}\} \in \mathbb{R}^{n \times n}$ is symmetric with diagonal entries all being equal to zero,
factor loading matrix $F = [f_1,f_2,\ldots,f_n]\in \mathbb{R}^{K\times n}$ records the relation between nodes and the underlying topics, $F^T$ is the transpose of $F$,
and matrix $D \in \mathbb{R}^{K \times K}$  is diagonal with entries being the weight coefficients of factors.

The above specifies a latent model (or equivalently a factor model).
We now describe a graphical model as follows.
The graphical model will complement the latent model by characterizing links that are not interpretable via common factors.
For the aforementioned binary random variable $X_{ij}$, $1\le i,j \le n$, we define
\begin{equation}
\label{eq:logistic05}
\mathbb{P}(X_{ij}=1) = \frac{e^{\alpha' + S_{ij} }}{1 + e^{\alpha' + S_{ij} }},
\end{equation}
where $S_{ij} \in \mathbb{R}$, for $1\le i,j \le n$, denotes the relation between nodes $i$ and $j$.
Note that the matrix $S$ is introduced to capture the ad-hoc links in the graph.
If we have $S_{ij}\leq0$, then it is less likely to have a citational relationship between nodes $i$ and $j$.
On the other hand, if $S_{ij}>0$, then it is more likely to have a citation link between nodes $i$ and $j$.
Here parameter $\alpha' \in \mathbb{R}$ plays the same role as parameter $\alpha$ does in model \eqref{eq:logistic01}.
Denote the matrix $S = \{S_{ij}, 1\le i,j \le n \} \in \mathbb{R}^{n \times n}$.
Assume that given the matrix $S$, the binary random variables
$X_{ij}$'s are independent;
consequently, we have the total conditional probability function as follows:
\begin{eqnarray}
\mathbb{P}(\{X_{ij}, 1\le i,j \le n\})
&=& \prod_{1\le i<j \le n} \mathbb{P}(X_{ij}) \nonumber \\
&=& \prod_{1\le i<j \le n}  \frac{e^{X_{ij}(\alpha' + S_{ij}) }}{1 + e^{\alpha' + S_{ij} }} \nonumber \\
&=& \frac{\exp\left(\alpha' \sum_{1\le i< j\le n}X_{ij} +\frac{1}{2} X \bullet S\right)}{\prod_{1\le i<j \le n}  1 + e^{\alpha' + S_{ij} }}.
\label{eq:logistic06}
\end{eqnarray}
Recall that we have assumed that $X_{ii}=0$ for all $i$ ($1\le i \le n$) and $X_{ij} = X_{ji}$ for all $i$ and $j$ ($1\le i,j \le n$), i.e., the matrix $X$ is symmetric.
In the combined model, we integrate \eqref{eq:logistic04} and
\eqref{eq:logistic06} to render the joint conditional probability function as follows:
\begin{eqnarray}
\label{eq:logistic07}
\mathbb{P}(X \mid \alpha,  F, D, S)
&=&  \prod_{1\le i<j \le n}
\frac{e^{X_{ij}(\alpha + S_{ij} + f_i^T D f_j) }}{1 + e^{\alpha + S_{ij}+ f_i^T D f_j}} \nonumber \\
&=& \frac{\exp\left(\alpha \sum_{1\le i< j\le n}X_{ij} +\frac{1}{2} X \bullet (F^T D F) +\frac{1}{2} X \bullet S\right)}{\prod_{1\le i<j \le n}  \left(1 + e^{\alpha + f_i^T D f_j +S_{ij}}\right) }.
\end{eqnarray}

\section{Estimation}
\label{sec:estimate}
Note that in the model \eqref{eq:logistic07}, the log-likelihood function has the form as follows:
\begin{eqnarray}
\label{eq:logistic08}
\mathbb{L}(\alpha,  F, D, S; X)
&=& \alpha \sum_{1\le i< j\le n}X_{ij} +\frac{1}{2} X \bullet (F^T D F) +\frac{1}{2} X \bullet S \\
&& -\sum_{1\le i<j\le n} \log \left(1 + e^{\alpha + f_i^T D f_j +S_{ij} }\right). \nonumber
\end{eqnarray}

If we consider maximizing the above log-likelihood function,
we will encounter several technical issues that are described below.
\begin{enumerate}
\item We would like the matrix $S \in \mathbb{R}^{n \times n}$ to have as many zero entries as possible; i.e., matrix $S$ is {\it sparse.}

\item There is an identifiability issue with the formation $F^T D F$.
More specifically, let $P \in \mathbb{R}^{K \times K}$ be a signed permutation matrix, then we have $P^T P = I_n$, where $I_n \in \mathbb{R}^{K \times K}$ is the identity matrix.
Notice that matrix $F' = PF$ is also a factor loading matrix, and
matrix $D' = P D P^T$ is still a diagonal matrix;
we have
$$
F^T D F = F^T P^T P D P^T P F = (F')^T D' F',
$$
i.e., the choice of $F$ and $D$ is not unique.

\item We would like the number of nonzeros in each column of $F$ to be small, reflecting that each node is associated with a small number of underlying topics.

\item Overall, the rank of matrix $F^T D F$ cannot be larger than $\min\{n,K\}$.
With the application that we have in mind, in this paper, we assume that $K$ is much smaller than $n$.

\item Following the approaches that were mentioned in the Introduction, we propose to relax $F^T D F$ to $L$, where $L$ is a low rank matrix.
Furthermore, to ensure the separation of matrices $\alpha\mathbbm{1}\mathbbm{1}^T$ and an arbitrary matrix $L$, we assume that the eigen-vector of $L$ is centered, that is,
\[
    JLJ = L \quad \mbox{where} \quad
    J=I_{n}-\frac{1}{n}\mathbbm{1}\mathbbm{1}^T,
\]
where $\mathbbm{1}$ denotes a $n$-dimensional vector whose entries are all $1$'s. Since we have $L=F^TDF$, this condition uniquely identifies $F$ up to a common orthogonal transformation of its columns.

\end{enumerate}

Directly maximizing the objective function in \eqref{eq:logistic08} is not going to be an easy task.
Consequently, the log-likelihood function in \eqref{eq:logistic08} can be rewritten as
\begin{eqnarray}
\label{eq:logistic09}
\mathbb{L}_n(\alpha,  L, S; X)
&=& \alpha \sum_{1\le i< j\le n}X_{ij} +\frac{1}{2} X \bullet L +\frac{1}{2} X \bullet S \\
&& -\sum_{1\le i<j\le n} \log \left(1 + e^{\alpha + L_{ij} +S_{ij}}\right). \nonumber
\end{eqnarray}

We propose a penalized likelihood estimation approach as follows:
\begin{eqnarray}
\label{eq:logistic10}
(\hat{\alpha}, \widehat{L}, \widehat{S})
= \mbox{arg min}_{\alpha, L, S} \left\{
-\frac{1}{n} \mathbb{L}_n(\alpha,  L, S; X) + \gamma \|S\|_1
+ \delta \|L\|_\ast\right\},
\end{eqnarray}
where $\gamma>0$ and $\delta>0$ are algorithmic parameters whose values will be discussed later,
the $L_1$ norm of matrix $S$ is defined as $\|S\|_1 = \sum_{i\neq j} |S_{ij}|$ (Note that we do not penalize the diagonal entries of $S$), and nuclear norm of matrix $L$ is defined as $\|L\|_\ast = {\mbox{trace}\sqrt{(L^T L)}}$.
Recall that both $S$ and $L$ are symmetric matrices.
The entries of matrix $S$ can either be positive or negative.
Note that we have imposed the diagonal entries of the matrix $X$ to be zeros.
Given that $L = F^T D F$ where matrix $D$ is diagonal with nonnegative diagonal entries, it is easy to see that matrix $L$ is positive semidefinite; which consequently leads to $\|L\|_\ast = \mbox{trace}(L)$, which is a linear functional to the matrix $L$.
The nuclear norm of $L$ mimicks the number of nonzero eigenvalues of $L$, which is the same as the rank
of $L$.
The regularization based on the nuclear norm was proposed in \cite{fazel2001rank}
and its statistical properties are studied in \cite{bach2008consistency}.

After we have obtained $\widehat{S}$ in \eqref{eq:logistic10}, we can uncover the graphical model by investigating non-zero entries in $\widehat{S}$.
On the other hand, when we have calculated $\widehat{L}$, we may not be able to find binary matrix $F$ and nonnegative diagonal matrix $D$ such that $\widehat{L} = F^T D F.$
This is the price we have to pay for an amenable computational approach.
The rank of estimated $\widehat{L}$ will be our estimate of the number of factors (i.e., the number of underlying common topics).
We will discuss the issue on assigning the community membership of each node $i$ later in Section \ref{sec:Num-anal}.

\section{Non-asymptotic error bound of the estimator}
\label{sec:theorem}
In this section, we focus on investigating the behaviour of non-asymptotic error bound of our estimator in the context where the number of papers in a network is explicitly tracked. We are interested in solving the following optimization problem :
\begin{equation}\label{eq:Modified}
\min\limits_{\alpha \in R, S = S^T \atop L \succcurlyeq 0}
-\frac{1}{n} \log \prod_{1\le i,j\le n}\frac{\exp\left(X_{ij}\left(\alpha+
L_{ij}+S_{ij}\right)\right)}{1+\exp\left(\alpha+
L_{ij}+S_{ij}\right)} + \delta \|L\|_\ast + \gamma \|S\|_1.
\end{equation}
For the convenience of theoretical investigation, we slightly modify the first term in the objective function summing over all $(i,j)$ pairs.
After scaling, due to symmetry of $X$,$L$, and $S$, the only difference between (\ref{eq:logistic10}) and (\ref{eq:Modified}) is in the inclusion of terms in diagonal pairs $(i,i),\forall i=1,\dots,n$.
Note that we have $X_{ii}=0$ in our setting.

We borrow the idea of this modification from the work of \cite{ma2017exploration}, where they also consider the latent factor model in analyzing the embedded topics in the network but without the sparse component.
As stated in \cite{ma2017exploration}, this slight modification leads to neither theoretical consequence nor noticeable difference in practice.
Let ($\widehat{\alpha},\widehat{L},\widehat{S}$) be the solution to (\ref{eq:Modified}), and ($\alpha^{*},L^{*},S^{*}$) be the ground truth, which governs the data generating process.
Let $\widehat{\Theta}$ and $\Theta^{*}$ be defined respectively as $\widehat{\Theta} = \widehat{\alpha}\mathbbm{1}\mathbbm{1}^T+\widehat{L}+\widehat{S}$ and $\Theta^{*} = \alpha^{*}\mathbbm{1}\mathbbm{1}^T+L^{*}+S^{*}$.
And denote the error term for each parameter as $\widehat{\Delta}^{\Theta} = \widehat{\Theta}-\Theta^{*},
\widehat{\Delta}^{\alpha} = \widehat{\alpha}-\alpha^{*},
\widehat{\Delta}^L = \widehat{L}-L^{*},
\widehat{\Delta}^S = \widehat{S}-S^{*}.$
Throughout the discussion, let $P^{*}=\bigg\{\frac{\exp(\Theta_{ij}^{*})}{1+\exp(\Theta_{ij}^{*})}\bigg\}_{1 \leq i,j \leq n} \in \mathbb{R}^{n \times n}$.
We describe several assumptions before we can establish theoretical guarantees of our estimator.

\begin{assumption}(\textbf{Strong convexity})  \label{Ass:1}
For any $\Theta \in \mathbb{R}^{n\times n}$, define the log-likelihood in (\ref{eq:Modified}):
$$
h(\Theta) = -\frac{1}{n}\sum_{i,j} \big\{ X_{ij}\Theta_{ij} - \log(1+\exp(\Theta_{ij})) \big\}.
$$
We assume that $h(\Theta)$ is $\tau$-strongly convex in a sense that lowest eigenvalue of Hessian matrix of the log-likelihood function is bounded away from zero ($\tau > 0$):
\[
\nabla^{2}h(\Theta) = \mbox{diag}\Big(\mbox{vec}\Big(\frac{1}{n}\frac{\exp(\Theta)}{(1+\exp(\Theta))^{2}}
\Big)\Big) \succcurlyeq \tau I_{n^{2} \times n^{2}}.
\]
For any vector $a$, $\mbox{diag}(a)$ is the diagonal matrix with elements of $a$ on its diagonal.
For any matrix $B=[b_1,\dots,b_n]\in\mathbb{R}^{n \times n}$, $\mbox{vec}(B)\in\mathbb{R}^{n^2}$ is obtained by stacking $ b_1,\dots, b_{n}$ in order.
For any square matrix $A$ and $B$, we have $ A \succcurlyeq B $ if and only if matrix $A-B$ is positive semi-definite.
\end{assumption}

\begin{assumption} (\textbf{Identifiability of $\alpha\mathbbm{1}\mathbbm{1}^T$ and $L$}) \label{Ass:2}
To ensure the separation between $\alpha\mathbbm{1}\mathbbm{1}^T$ and $L$, we assume that the latent variables are centered, that is $JL=L$, where $J=I_{n}-\frac{1}{n}\mathbbm{1}\mathbbm{1}^T$, where $\mathbbm{1}$ denotes an all one vector in $\mathbb{R}^{n}$.
\end{assumption}

\begin{assumption} (\textbf{Spikiness of $L$ and Constraint on $\alpha$}) \label{Ass:3}
We impose a spikiness condition $\|L\|_{\infty}\leq\frac{\kappa}{\sqrt{n \times n}}$ on $L$, to ensure the separation of $L$ and matrix $S$ \cite{agarwal2012noisy}.
We would also like to note that the constraint $|\alpha|\leq C\kappa$, for an absolute constant $C$, is included partially for obtaining theoretical guarantees.
\end{assumption}

With these assumptions, we present the behavior of non-asymptotic error bound of our estimator through the following theorem. In our result, we measure error using squared Frobenius norm summed across three matrices:
\[
    e^{2}\big(\widehat{\alpha}\mathbbm{1}\mathbbm{1}^T,\widehat{L},\widehat{S}\big)
    :=\big\|\widehat{\Delta}^{\alpha}\mathbbm{1}\mathbbm{1}^T\big\|_{F}^{2} + \big\|\widehat{\Delta}^{L}\big\|_{F}^{2} + \big\|\widehat{\Delta}^{S}\big\|_{F}^{2}
\]

\begin{theorem} \label{Th:th1}
Under the Assumptions \ref{Ass:1}, \ref{Ass:2} and \ref{Ass:3},
if we solve the convex problem (\ref{eq:Modified}) with a pair of regularization parameter $(\delta,\gamma)$ satisfying
\begin{align} \label{eq:49}
\delta \geq 2 \left\|\frac{1}{n}(X-P^{*}) \right\|_{op} \quad and \quad \gamma \geq 2\left\|\frac{1}{n}(X-P^{*})\right\|_{\infty}+4\kappa\tau\bigg(\frac{Cn+1}{n} \bigg),
\end{align}
then there exist universal constants $c_{j}$, j = 1,2,3,  for all integers $k = 1,2,...,n$, and $s = 1,2,...,n^{2}$, and we have the following upper bound of $e^{2}\big(\widehat{\alpha}\mathbbm{1}\mathbbm{1}^T,\widehat{L},\widehat{S}\big)$:
\begin{align} \label{theorem1}
    e^{2}\big(\widehat{\alpha}\mathbbm{1}\mathbbm{1}^T,\widehat{L},\widehat{S}\big) \leq
    \underbrace{c_{1}\frac{\delta^{2}}{\tau^{2}}}
    _{\mathcal{K}_{\alpha^*}} +
    \underbrace{c_{2}\frac{\delta^{2}}{\tau^{2}}
    \bigg\{k + \frac{\tau}{\delta}\sum_{j=k+1}^{n}\sigma_{j}(L^{*})\bigg\}}_{\mathcal{K}_{L^*}} +
    \underbrace{c_{3}\frac{\gamma^{2}}{\tau^{2}}
    \bigg\{s + \frac{\tau}{\gamma}\sum_{(i,j) \notin M}|S^*_{ij}|\bigg\}}_{\mathcal{K}_{S^*}},
\end{align}
where $M$ is an arbitrary subset of matrix indices of cardinality at most $s$.
\end{theorem}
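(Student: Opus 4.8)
The plan is to run the regularized $M$-estimation argument of Negahban--Wainwright, specialized to the logistic negative log-likelihood loss and to a parameter that is a sum of three structured pieces --- a rank-one shift $\alpha\mathbbm{1}\mathbbm{1}^T$, a low-rank matrix $L$, and a sparse matrix $S$ --- in the spirit of the noisy matrix decomposition analysis of Agarwal--Negahban--Wainwright, with Assumption~\ref{Ass:1} supplying strong convexity. First I would write the basic inequality: since $(\widehat\alpha,\widehat L,\widehat S)$ is optimal in \eqref{eq:Modified} and $(\alpha^*,L^*,S^*)$ is feasible, comparing objective values and setting $W:=-\nabla h(\Theta^*)=\tfrac1n(X-P^*)$ gives, after invoking $\tau$-strong convexity,
\[
\tfrac{\tau}{2}\big\|\widehat\Delta^\Theta\big\|_F^2\le \langle W,\widehat\Delta^\Theta\rangle+\delta\big(\|L^*\|_\ast-\|\widehat L\|_\ast\big)+\gamma\big(\|S^*\|_1-\|\widehat S\|_1\big).
\]

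Next I would disentangle the three components. Write $\widehat\Delta^\Theta=\widehat\Delta^\alpha\mathbbm{1}\mathbbm{1}^T+\widehat\Delta^L+\widehat\Delta^S$; by Assumption~\ref{Ass:2} (centering $JL=L$ for both $L^*$ and $\widehat L$) one gets $\mathbbm{1}^T\widehat\Delta^L=0$, hence $\langle\widehat\Delta^\alpha\mathbbm{1}\mathbbm{1}^T,\widehat\Delta^L\rangle=0$ and
\[
e^2\big(\widehat\alpha\mathbbm{1}\mathbbm{1}^T,\widehat L,\widehat S\big)=\big\|\widehat\Delta^\Theta\big\|_F^2-2\langle\widehat\Delta^\alpha\mathbbm{1}\mathbbm{1}^T,\widehat\Delta^S\rangle-2\langle\widehat\Delta^L,\widehat\Delta^S\rangle.
\]
The two surviving cross terms are where Assumption~\ref{Ass:3} is used: spikiness gives $|\langle\widehat\Delta^L,\widehat\Delta^S\rangle|\le\|\widehat\Delta^L\|_\infty\|\widehat\Delta^S\|_1\le\tfrac{2\kappa}{n}\|\widehat\Delta^S\|_1$ and $|\alpha|\le C\kappa$ gives $|\langle\widehat\Delta^\alpha\mathbbm{1}\mathbbm{1}^T,\widehat\Delta^S\rangle|\le 2C\kappa\|\widehat\Delta^S\|_1$. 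Combining these with the duality bounds $\langle W,\widehat\Delta^L\rangle\le\|W\|_{\mathrm{op}}\|\widehat\Delta^L\|_\ast$, $\langle W,\widehat\Delta^S\rangle\le\|W\|_\infty\|\widehat\Delta^S\|_1$, and $\langle W,\widehat\Delta^\alpha\mathbbm{1}\mathbbm{1}^T\rangle\le\|W\|_{\mathrm{op}}\|\widehat\Delta^\alpha\mathbbm{1}\mathbbm{1}^T\|_F$ (the rank-one matrix has equal nuclear and Frobenius norms), the choices $\delta\ge2\|W\|_{\mathrm{op}}$ and $\gamma\ge2\|W\|_\infty+4\kappa\tau(Cn+1)/n$ of \eqref{eq:49} are exactly what is needed to dominate every noise term and every cross term by half of the matching penalty term.

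Then I would bring in decomposability. With respect to the top-$k$ singular subspace of $L^*$, $\|L^*\|_\ast-\|\widehat L\|_\ast\le\|\widehat\Delta^L_{\mathcal A}\|_\ast-\|\widehat\Delta^L_{\mathcal A^\perp}\|_\ast+2\sum_{j>k}\sigma_j(L^*)$ with $\mathrm{rank}(\widehat\Delta^L_{\mathcal A})\le 2k$; with respect to an index set $M$ with $|M|\le s$, $\|S^*\|_1-\|\widehat S\|_1\le\|\widehat\Delta^S_M\|_1-\|\widehat\Delta^S_{M^\perp}\|_1+2\sum_{(i,j)\notin M}|S^*_{ij}|$. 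Plugging these in, the negative terms $\|\widehat\Delta^L_{\mathcal A^\perp}\|_\ast$, $\|\widehat\Delta^S_{M^\perp}\|_1$ both (i) yield a cone constraint and (ii) drop out of the main bound, leaving an inequality of the form $\tfrac{\tau}{2}e^2\lesssim(\delta+\delta\sqrt{k}+\gamma\sqrt{s})\,e+\delta\sum_{j>k}\sigma_j(L^*)+\gamma\sum_{(i,j)\notin M}|S^*_{ij}|$ after using $\|\widehat\Delta^L_{\mathcal A}\|_\ast\le\sqrt{2k}\,e$, $\|\widehat\Delta^S_M\|_1\le\sqrt{s}\,e$, and $\|\widehat\Delta^\alpha\mathbbm{1}\mathbbm{1}^T\|_F\le e$. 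Discharging the term linear in $e$ via Young's inequality $ab\le\tfrac{\tau}{4}a^2+\tfrac1\tau b^2$ and rearranging yields the three summands $\mathcal K_{\alpha^*}$, $\mathcal K_{L^*}$, $\mathcal K_{S^*}$ displayed in \eqref{theorem1}; optimizing over $k$ and $M$ (or just taking $k$ equal to the true rank and $M$ equal to the true support when $L^*$ is exactly low-rank and $S^*$ exactly sparse) removes the tail sums.

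The one step with genuine content is the disentangling of the three pieces: centering (Assumption~\ref{Ass:2}) eliminates the $\alpha$--$L$ interaction for free, but the $L$--$S$ and $\alpha$--$S$ cross terms persist and must be shown to be negligible against $\|\widehat\Delta^S\|_1$ using spikiness and the bound on $|\alpha|$ (Assumption~\ref{Ass:3}); carefully tracking how these corrections feed into \emph{both} the cone constraint and the final quadratic is precisely what forces the extra $4\kappa\tau(Cn+1)/n$ in the admissible range of $\gamma$. Everything else is a routine two-norm instance of the standard regularized $M$-estimation template together with the strong convexity granted by Assumption~\ref{Ass:1}.
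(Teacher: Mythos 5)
Your proposal is correct and follows essentially the same route as the paper's own proof: the basic inequality plus $\tau$-strong convexity, dual-norm control of the noise term under the stated choices of $(\delta,\gamma)$, the Agarwal--Negahban--Wainwright decomposition/decomposability lemma (the paper's Lemma 8.1) together with the cone-type bound on the orthogonal-complement pieces (the paper's Lemma 8.2), cross-term control via centering, spikiness, and $|\alpha|\le C\kappa$, and finally the $\sqrt{2k}$ and $\Psi(M)\le\sqrt{s}$ compatibility step with a Young-type rearrangement. The only cosmetic difference is notational (the paper packages the two regularizers into $\mathbb{Q}(L,S)=\|L\|_\ast+\tfrac{\gamma}{\delta}\|S\|_1$), so no further comparison is needed.
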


We would first like to note that the result presented in Theorem \ref{Th:th1} can be thought of as an extension of Theorem $1$ presented in paper \cite{agarwal2012noisy} to a generalized linear model.
Specifically, our work considers a logistic loss function whose parameter is characterized by a sparse matrix plus a low rank matrix,
whereas Agarwal, et al. \cite{agarwal2012noisy} work on a general linear observation model whose parameter is also
characterized by a sum of a low rank matrix and a sparse matrix.


Astute readers might have noticed that the upper bound in \eqref{theorem1} consists of three different terms, where we denote them as
$\mathcal{K}_{\alpha^*}$, $\mathcal{K}_{L^*}$ and $\mathcal{K}_{S^*}$.
Each respective term is involved with estimating three model parameters: $\alpha, L$ and $S$.
To be more specific, both $\mathcal{K}_{L^*}$ and $\mathcal{K}_{S^*}$ have two types of error:
1) The first one is called as an ``estimation error.''
This error represents the statistical cost of estimating parameters that belong to the model subspace.
2) Another quantity is referred as ``approximation error.''
This error occurs when we only focus on estimating parameters within the model subspace,
and it shrinks as the model subspace becomes large.

The result of the Theorem \ref{Th:th1} provides a family of upper-bounds, one for each indexed by a specific choice of model subspace $M$, and rank parameter $k$.
In other words, this means that the subset $M$ and the target rank $k$ can be adaptively chosen so as to obtain the tightest upper bound.
In ideal case where $L^*$ is an exact low rank matrix with rank $k$ \big(i.e., $\mbox{rank}(L^*)=k$\big) and $S^*$ is a sparse matrix, whose support lies within the model subspace $M$ \big(i.e., $\mbox{supp}(S^*) \subset M$\big), we can easily see ``\emph{approximation error}'' terms in $\mathcal{K}_{L^*}$ \big(i.e., $\delta\sum_{j=k+1}^{n}\sigma_j(L^*)$ \big) and in $\mathcal{K}_{S^*}$ \big(i.e., $\gamma\sum_{(i,j)\notin M}|S_{ij}^*|$\big) disappear, giving us Frobenius error bound as follows:
\[
    e^{2}(\hat{\alpha}\mathbbm{1}\mathbbm{1}^T,\widehat{L},\widehat{S}) \lesssim \delta^2(k+1)+\gamma^2s.
\]
Here we use the notation $X \lesssim Y$ to denote that there exists universal absolute constant $C$ such that $X \leq CY$.


\section{Computation}
\label{sec:compute}

We propose a method that takes advantage of the special structure
of the $L_1$ and the nuclear norm by means of the
alternating direction method of multiplier (ADMM), which is a method
that has recently gained momentum.
An examination of the objective function in \eqref{eq:logistic10} unvails that
terms
\[
\alpha \sum_{1\le i< j\le n}X_{ij} +\frac{1}{2} X \bullet L +\frac{1}{2} X \bullet S
\]
are linear in $\alpha,  L$, and $S$.
The term
\[
\sum_{1\le i<j\le n} \log \left(1 + e^{\alpha + L_{ij} +S_{ij}}\right)
\]
is convex with respect to $\alpha,  L$, and $S$.
Functions $\|S\|_1$ and $\|L\|_\ast$ are known to be convex functions.
Therefore, the objective function in \eqref{eq:logistic10} is convex.
The above convex optimization problem can be solved via ADMM as follows.

\subsection{ADMM approach}
\label{sec:ADMM}
We give a review of the alternating direction method
of multiplier (ADMM).
Consider two closed convex functions
$$
f : \chi_f \to \mathbb{R} \mbox{ and } g : \chi_g \to \mathbb{R},
$$
where the domain $\chi_f$ and $\chi_g$ of functions $f$ and $g$ are closed convex subsets of $\mathbb{R}^d$, and $\chi_f \bigcap \chi_g$ is nonempty.
Both $f$ and $g$ are possibly non-differentiable.
The alternating direction method
of multiplier is an iterative algorithm that solves the following generic optimization problem:
$$
\min_{x \in \chi_f \bigcap \chi_g} \left\{f(x) + g(x) \right\},
$$
or equivalently
\begin{eqnarray}
\label{eq:admm1}
\min_{x \in \chi_f, z\in \chi_g} & \left\{f(x) + g(z) \right\}, \\
\mbox{ subject to } & x = z. \nonumber
\end{eqnarray}
To describe the algorithm, we will need the following proximal operators
\begin{itemize}
\item $\mathbf{P}_{\lambda,f}: \mathbb{R}^d \to \chi_f$ as
$$
\mathbf{P}_{\lambda,f}(v) = \mbox{arg min}_{x \in \chi_f} \left\{
f(x) + \frac{1}{2\lambda} \|x-v\|^2_2
\right\},
$$

\item and $\mathbf{P}_{\lambda,g}: \mathbb{R}^d \to \chi_g$ as
$$
\mathbf{P}_{\lambda,g}(v) = \mbox{arg min}_{x \in \chi_g} \left\{
g(x) + \frac{1}{2\lambda} \|x-v\|^2_2
\right\},
$$
where $\|\cdot\|_2$ is the usual Euclidean norm on $\mathbb{R}^d$ and $\lambda$ is a scale parameter that is a fixed positive constant.
\end{itemize}
The algorithm starts with some initial values $x^0 \in \chi_f,
z^0 \in \chi_g, u^0 (=\lambda y^0) \in \mathbb{R}^d$.
At the $(m+1)$th iteration, $(x^m, z^m, u^m)$ is updated according to the following steps until convergence
\begin{itemize}
\item Step 1: $x^{m+1} = \mathbf{P}_{\lambda,f}(z^m - u^m)$,

\item Step 2: $z^{m+1} = \mathbf{P}_{\lambda,g}(x^{m+1} + u^m)$,

\item Step 3: $u^{m+1} = u^m + x^{m+1} - z^{m+1}$.

\end{itemize}
The convergence properties of the algorithm are summarized in the following result as in \cite{boyd2011distributed}.
Let $p^\ast$ be the minimal value in \eqref{eq:admm1}.

\begin{theorem}[Boyd et al., 2011]
Assume functions $f: \chi_f \to \mathbb{R}$ and
$g: \chi_g \to \mathbb{R}$ are
closed convex functions, whose domains $\chi_f$ and $\chi_g$
are closed convex subsets of $\mathbb{R}^d$ and
$\chi_f \bigcap \chi_g \neq \emptyset$.
Assume the Lagrangian of \eqref{eq:admm1}
$$
L(x,z,y) = f(x) + g(z) + y^T(x-z)
$$
has a saddle point, that is, there exists $(x^\ast, z^\ast, y^\ast)$ (not necessarily unique) that $x^\ast \in \chi_f$ and
$z^\ast \in \chi_g$, for which
$$
L(x^\ast, z^\ast, y) \le L(x^\ast, z^\ast, y^\ast) \le
L(x, z, y^\ast), \qquad \forall x, z, y \in \mathbb{R}^d.
$$
Then the ADMM has the following convergence properties.
\begin{enumerate}
\item Residual convergence. $x^m - z^m \to 0$
as $m \to \infty$; i.e., the iterates approach feasibility.

\item Objective convergence. $f(x^m) + g(z^m) \to p^\ast$ as $m \to \infty$; i.e., the objective function of
the iterates approaches the optimal value.

\item Dual variable convergence. $y^m \to y^\ast$ as $m \to \infty$, where $y^\ast$ is a dual optimal point.

\end{enumerate}
\end{theorem}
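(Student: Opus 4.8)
The plan is to run the classical Lyapunov (Fej\'er-monotonicity) argument for ADMM, specialized to the consensus constraint $x=z$ of the present formulation. Put $\rho:=1/\lambda$ and $r^{m+1}:=x^{m+1}-z^{m+1}$. First I would recast the two proximal steps as augmented-Lagrangian minimizations: substituting $u^m=\lambda y^m$ and completing the square shows that $x^{m+1}=\mathbf{P}_{\lambda,f}(z^m-u^m)$ minimizes $f(x)+(y^m)^T(x-z^m)+\tfrac{\rho}{2}\|x-z^m\|_2^2$, that $z^{m+1}$ minimizes $g(z)-(y^m)^Tz+\tfrac{\rho}{2}\|x^{m+1}-z\|_2^2$, and that Step 3 of the algorithm is $y^{m+1}=y^m+\rho\,r^{m+1}$. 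The subgradient optimality conditions of these two subproblems, after regrouping $y^m+\rho(x^{m+1}-z^m)=y^{m+1}+\rho(z^{m+1}-z^m)$, read $-y^{m+1}-\rho(z^{m+1}-z^m)\in\partial f(x^{m+1})$ and $y^{m+1}\in\partial g(z^{m+1})$. Evaluating the associated subgradient inequalities at $x^\ast$ and $z^\ast$ respectively, adding them, and using $x^\ast=z^\ast$, I obtain the suboptimality upper bound
\[
p^{m+1}-p^\ast\ \le\ -(y^{m+1})^Tr^{m+1}-\rho\,(z^{m+1}-z^m)^T(x^{m+1}-x^\ast),
\]
where $p^{m+1}:=f(x^{m+1})+g(z^{m+1})$.

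Next I would use the saddle point. The inequality $L(x^\ast,z^\ast,y^\ast)\le L(x^{m+1},z^{m+1},y^\ast)$ gives the matching lower bound $p^{m+1}-p^\ast\ge-(y^\ast)^Tr^{m+1}$. Adding the two bounds cancels $p^{m+1}-p^\ast$ and leaves $(y^{m+1}-y^\ast)^Tr^{m+1}+\rho(z^{m+1}-z^m)^T(x^{m+1}-x^\ast)\le 0$. Substituting $y^{m+1}-y^\ast=(y^m-y^\ast)+\rho r^{m+1}$, $x^{m+1}-x^\ast=r^{m+1}+(z^{m+1}-z^\ast)$, and $y^{m+1}-y^m=\rho r^{m+1}$, and applying the polarization identity $2a^T(a-b)=\|a\|_2^2-\|b\|_2^2+\|a-b\|_2^2$ to the $y$- and $z$-increments, the terms carrying $y^\ast$ and $z^\ast$ assemble into $V^{m+1}-V^m$, where $V^m:=\tfrac{1}{\rho}\|y^m-y^\ast\|_2^2+\rho\|z^m-z^\ast\|_2^2$, and the remainder reduces to $V^{m+1}-V^m\le-\rho\|r^{m+1}+(z^{m+1}-z^m)\|_2^2$. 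Finally, since $y^{m+1}\in\partial g(z^{m+1})$ and $y^m\in\partial g(z^m)$ for $m\ge1$, monotonicity of $\partial g$ gives $(y^{m+1}-y^m)^T(z^{m+1}-z^m)=\rho\,(r^{m+1})^T(z^{m+1}-z^m)\ge0$, which upgrades the last inequality to the clean Lyapunov decrease $V^{m+1}\le V^m-\rho\|r^{m+1}\|_2^2-\rho\|z^{m+1}-z^m\|_2^2$.

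The three conclusions then follow. The sequence $V^m\ge0$ is non-increasing, hence convergent and bounded, so $(y^m)$ and $(z^m)$ are bounded; summing the decrease inequality yields $\sum_m\|r^{m+1}\|_2^2<\infty$ and $\sum_m\|z^{m+1}-z^m\|_2^2<\infty$, so in particular $x^m-z^m=r^m\to 0$ (residual/feasibility convergence) and $z^{m+1}-z^m\to 0$. Passing to the limit in both the upper and the lower suboptimality bounds, and using $r^m\to 0$, $z^{m+1}-z^m\to 0$, and the boundedness of $(x^m),(y^m)$, squeezes $f(x^m)+g(z^m)\to p^\ast$ (objective convergence). For dual convergence, I would extract a subsequence along which $y^{m_j}\to\hat y$ and, after refining, $z^{m_j}\to\hat z$ (hence $x^{m_j}\to\hat z$); closedness of the graphs of $\partial f$ and $\partial g$, together with $z^{m_j}-z^{m_j-1}\to 0$, give $-\hat y\in\partial f(\hat z)$ and $\hat y\in\partial g(\hat z)$, so $(\hat z,\hat z,\hat y)$ is a saddle point of $L$ and $\hat y$ is a dual optimal point; re-running the Lyapunov estimate anchored at $y^\ast=\hat y$, $z^\ast=\hat z$ makes the corresponding $V^m$ monotone with a subsequence tending to $0$, hence $V^m\to 0$, so $y^m\to\hat y$ (and, as a byproduct, $z^m\to\hat z$ and $x^m\to\hat z$).

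I expect the main obstacle to be the algebra in the second paragraph: arranging the cross terms in $(y^{m+1}-y^\ast)^Tr^{m+1}+\rho(z^{m+1}-z^m)^T(x^{m+1}-x^\ast)\le 0$ so that they telescope exactly into $V^{m+1}-V^m$ plus a manifestly non-positive remainder, which requires applying the polarization identities in a careful order and --- crucially --- bringing in the monotonicity-of-$\partial g$ observation; without that last ingredient one controls only the combination $r^{m+1}+(z^{m+1}-z^m)$ rather than $r^{m+1}$ and $z^{m+1}-z^m$ separately, which would not suffice for residual convergence. A secondary subtlety is the endgame for dual convergence, where one must verify that subsequential limit points genuinely satisfy the KKT conditions and then re-anchor the Lyapunov function at such a point to pin down the entire sequence.
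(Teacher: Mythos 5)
This statement is not proved in the paper at all: it is quoted verbatim as a known result and attributed to Boyd et al.\ (2011), so there is no in-paper argument to compare against. Your proposal is, in substance, the standard Lyapunov (Fej\'er-monotonicity) proof from the appendix of that reference, and I checked the key steps: the recasting of the two proximal steps as augmented-Lagrangian minimizations with $\rho=1/\lambda$, the optimality conditions $-y^{m+1}-\rho(z^{m+1}-z^m)\in\partial f(x^{m+1})$ and $y^{m+1}\in\partial g(z^{m+1})$, the paired upper/lower suboptimality bounds from the subgradient and saddle-point inequalities, and the polarization algebra leading to $V^{m+1}\le V^m-\rho\|r^{m+1}+(z^{m+1}-z^m)\|_2^2$ are all correct, as is the use of monotonicity of $\partial g$ (via $y^{m+1}-y^m=\rho r^{m+1}$) to split off $-\rho\|r^{m+1}\|_2^2-\rho\|z^{m+1}-z^m\|_2^2$; summability then gives residual convergence, and the squeeze between the two suboptimality bounds gives objective convergence. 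The one place you go beyond what Boyd et al.\ write out in detail is the dual-variable convergence: your subsequence extraction, graph-closedness of $\partial f,\partial g$ (using $z^{m_j}-z^{m_j-1}\to 0$ and $x^{m_j}-z^{m_j}\to 0$) to certify that the limit point is a saddle point, and the re-anchoring of the Lyapunov function at that point to upgrade subsequential to full convergence, is a correct and standard way to complete that part. Two small points worth making explicit: the saddle-point hypothesis forces $x^\ast=z^\ast$ and $f(x^\ast)+g(z^\ast)=p^\ast$ (you use both), and since the paper's $f$ carries the constraint set $\chi_f$, one should view $f,g$ as extended-real-valued closed convex functions so that the subgradient optimality conditions for the constrained proximal steps are legitimate; with those remarks your argument stands as a complete proof of the cited theorem.
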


Now we describe how ADMM can be adopted to solve for our penalized likelihood estimation problem in \eqref{eq:logistic10}.
We reparameterize $M = L + S$ and let $x = (\alpha, M, L, S)$ (viewed as a vector).
We define the following:
\begin{eqnarray*}
\chi_f &=& \{ (\alpha, M, L, S): \alpha\in\mathbb{R}, M, L, S \in \mathbb{R}^{n \times n},
L \mbox{ is positive semidefinite, }
S \mbox{ is symmetricg} \}, \\
f(x) &=&
-\frac{\alpha}{n} \sum_{1\le i< j\le n}X_{ij}
-\frac{1}{2n} X \bullet M 
+ \frac{1}{n}
\sum_{1\le i<j\le n} \log \left(1 + e^{\alpha + M_{ij}}\right)
+ \gamma \|S\|_1
+ \delta \|L\|_\ast, \\
\chi_g &=& \{ (\alpha, M, L, S): \alpha\in\mathbb{R}, M, L, S \in \mathbb{R}^{n \times n},
M \mbox{ is symmetric and } M=L+S \}, \mbox{ and }\\
g(x) &=& 0, \mbox{ for } x \in \chi_g.
\end{eqnarray*}
One can verify that \eqref{eq:logistic10} can be written as
$$
\min_{x \in \chi_f \bigcap \chi_g} \left\{f(x) + g(x) \right\}.
$$

We now present each of the three steps of the ADMM algorithm and show that the proximal
operators $\mathbf{P}_{\lambda,f}$ and $\mathbf{P}_{\lambda,g}$ are easy to evaluate.
Let
$$
x^m = (x^m_\alpha, x^m_M, x^m_L, x^m_S), \quad
z^m = (z^m_\alpha, z^m_M, z^m_L, z^m_S), \quad
u^m = (u^m_\alpha, u^m_M, u^m_L, u^m_S).
$$
Step 1. We solve $x^{m+1} = \mathbf{P}_{\lambda,f}(z^m - u^m)$.
Due to the special structure of $f(\cdot)$,
$x^{m+1}_\alpha, x^{m+1}_M, x^{m+1}_L$, and $x^{m+1}_S$
can be updated separately.
More precisely, we have
\begin{eqnarray}
x^{m+1}_\alpha, x^{m+1}_M &=& \mbox{arg min}_{\alpha, M} \quad
-\frac{\alpha}{n} \sum_{1\le i< j\le n}X_{ij}
-\frac{1}{2n} X \bullet M
+ \frac{1}{n} \sum_{1\le i<j\le n} \log \left(1 + e^{\alpha + M_{ij}}\right) \nonumber \\
&& + \frac{1}{2\lambda}\left[\alpha - (z^m_\alpha - u^m_\alpha)\right]^2
+ \frac{1}{2\lambda}\|M - (z^m_M - u^m_M)\|^2_F, \label{eq:admm02} \\
x^{m+1}_L &=& \mbox{arg min}_{ L} \quad \delta \|L\|_\ast
+ \frac{1}{2\lambda}\|L - (z^m_L - u^m_L)\|^2_F,\label{eq:admm03}
\\
&& \mbox{subject to $L$ is positive semidefinite;} \nonumber \\
x^{m+1}_S &=& \mbox{arg min}_{ S} \quad  \gamma \|S\|_1
+ \frac{1}{2\lambda}\|S - (z^m_S - u^m_S)\|^2_F,
\label{eq:admm04}
\\
&& \mbox{subject to $S$ is symmetric,} \nonumber
\end{eqnarray}
where $\|\cdot\|_F$ is the matrix Frobenius norm, defined as
$\|M\|^2_F = \sum_{i,j} m^2_{ij}$ for a matrix $M = \{(m_{ij})\}_{i,j=1}^{n}$.
The problem in \eqref{eq:admm02} may not have a closed-form solution.
We use a simple gradient descent to solve in this step, setting the step size equal to $0.05$ and stopping criteria as $\max\big(|x_{\alpha,m}^{(t+1)}-x_{\alpha,m}^{(t)}|,\|x_{M,m}^{(t+1)}-x_{M,m}^{(t)}\|_{\infty}\big) \leq 10^{-9}$.
Note that there are close-form solutions to \eqref{eq:admm03} and \eqref{eq:admm04}, while \eqref{eq:admm02} is a unconstrained convex optimization problem.
More specifically, in \eqref{eq:admm03}, suppose the eigenvalue decomposition of the symmetric matrix $(z^m_L - u^m_L)$ can be written as
$$
z^m_L - u^m_L = T \Lambda T^T,
$$
where $T$ is orthogonal ($T T^T = I_n$). Then, for $J=I_{n}-\frac{1}{n}\mathbbm{1}\mathbbm{1}^T$, we have
$$
x^{m+1}_L = J \big(T \mbox{diag}(\Lambda-\lambda \delta\big)_+ T^T)J^{T},
$$
and diag$(\Lambda-\lambda \delta)_+$ is a diagonal matrix with the $j$th
diagonal entry being
$$
(\Lambda_{jj}-\lambda \delta)_+ = \left\{
\begin{array}{ll}
0, & \mbox{ if } \Lambda_{jj} < \lambda \delta, \\
\Lambda_{jj}-\lambda \delta, & \mbox{ if } \Lambda_{jj} \ge \lambda \delta .
\end{array}
\right.
$$
In \eqref{eq:admm04}, we have, for $i \neq j$,
\[
S_{ij} = \left\{
\begin{array}{ll}
0, & \mbox{ if } |(z^m_S - u^m_S)_{ij}| < \lambda \gamma,  \\
(z^m_S - u^m_S)_{ij}-\lambda\gamma , &
\mbox{ if } (z^m_S - u^m_S)_{ij} > \lambda \gamma, \\
(z^m_S - u^m_S)_{ij}+\lambda\gamma , &
\mbox{ if } (z^m_S - u^m_S)_{ij} <- \lambda \gamma.
\end{array}
\right.
\]

\noindent
Step 2. We solve $z^{m+1} = \mathbf{P}_{\lambda,g}(x^{m+1} + u^m)$.
A close-form solution exists here.
Denote
$
\bar{\alpha} = x^{m+1}_\alpha + u^m_\alpha,
\bar{M} = x^{m+1}_M + u^m_M,
\bar{L} = x^{m+1}_L + u^m_L$, and
$\bar{S} = x^{m+1}_S + u^m_S,
$
then evaluating $\mathbf{P}_{\lambda,g}(x^{m+1} + u^m)$ becomes
\begin{eqnarray*}
\min_{\alpha,M,L,S} & \quad
\frac{1}{2}[\alpha - \bar{\alpha}]^2
+ \frac{1}{2}\|M - \bar{M}\|^2_F
+ \frac{1}{2}\|L - \bar{L}\|^2_F
+ \frac{1}{2}\|S - \bar{S}\|^2_F  \\
\mbox{ subject to } & M \mbox{ is symmetric and } M=L+S.
\end{eqnarray*}
The above optimization problem has a close-form solution, which is as follows:
\begin{eqnarray*}
z^{m+1}_\alpha &=& \bar{\alpha}, \\
z^{m+1}_M &=&
\frac{1}{3} \bar{M} + \frac{1}{3} \bar{M}^T + \frac{1}{3} \bar{L} + \frac{1}{3} \bar{S}, \\
z^{m+1}_L &=&
\frac{1}{6} \bar{M} + \frac{1}{6} \bar{M}^T + \frac{2}{3} \bar{L} - \frac{1}{3} \bar{S},
\quad \text{and} \\
z^{m+1}_S &=&
\frac{1}{6} \bar{M} + \frac{1}{6} \bar{M}^T - \frac{1}{3} \bar{L} + \frac{2}{3} \bar{S}.
\end{eqnarray*}

\noindent
Step 3. We solve $u^{m+1} = u^m + x^{m+1} - z^{m+1}$, which is a simple arithmetic. \\

The most important implementation details of this algorithm are the choice of $\lambda$ and stopping criterion.
In this work, we simply choose $\lambda = 0.5$.
We terminate the algorithm  when in the $m$th iteration, we have $\|x^{m}_M - x^{m}_L - x^{m}_S\|_F \leq \delta$, with $\delta=10^{-7}$.

\section{ Numerical analysis and Applications}
\label{sec:Num-anal}
Section \ref{sec:Num-anal} is divided into two parts.
In Section \ref{sec:Syn-dat}, we conduct an empirical study of our proposed method with synthetic graphical structures.
In Section \ref{sec:citation_network}, we perform a real data analysis with a citation network for statisticians.


\subsection{Numerical experiments with synthetic data}
\label{sec:Syn-dat}
First, we introduce two synthetic scenarios that we want to explore (Section \ref{sec:Syn-set}).
Then, we describe three model selection criteria and four evaluation metrics for the selected model
(Section \ref{sec:Eval-crit}).
Subsequently, we elaborate experimental results from the synthetic networks and several interesting findings from those results (Section \ref{sec:obs}).

\subsubsection{Synthetic Setting} \label{sec:Syn-set}
Before specifying the network settings in two scenarios that we want to explore, let us first describe a set of steps for setting the model parameters, $\alpha^*, F^*, D^*$ and $S^*$ sequentially.
We put astroids in the superscripts of parameters to indicate that they are the ground truth.
Readers can refer the meaning of each parameter in the model in Section \ref{sec:intro} and  \ref{sec:model-form}.

\begin{enumerate}
    \item We draw an intercept term $\alpha^*$ in the logistic regression model from the uniform distribution that is supported on [-11,-10].
        In this way, we can make $\alpha^*$ have the least effects in creating edges in the network.

    \item Recall that the binary factor loading matrix $F^*$ encodes the relation between topics and papers
        (i.e., if $i$th paper studies about $k$th topic, we denote $F^*_{ki}=1$, otherwise $F^*_{ki}=0$).
        First, we assume that there are $n$ papers in the network, and $K$ topics are embedded in it.
        Each of them consists of roughly $\frac{n}{K}$ papers.
        This can be expressed in $F^*$ as follows:
        \[
            F^* =
               \begin{bmatrix}
                 \smash[b]{\block{n/K}} \\
                 && \smash[b]{\block{n/K}} \\
                 &&&& \ddots \\
                 &&&&& \block{n/K}
                \end{bmatrix}
              \in R^{K \times n},
        \]
        where each row of $F^*$ has $\frac{n}{K}$ $1$'s and each column has only one $1$.
        Note that the remaining entries of the matrix are filled with zeros.
        Then, we assume that $n_{l}$ papers share $l$ topics and $n_{m}$ papers share $m$ topics ($1 < l < m \leq K$).
        This can be represented in the $F^*$ in following steps:
        $1)$ Pick distinct $n_{l}$ indices randomly from $\{1,2,\dots,n\}$.
        We will denote the set of the indices as $\Omega_{n_l}$.
        $2)$ Choose $n_{m}$ indices from the set $\{1,2,\dots,n\}\setminus \Omega_{n_{l}}$ and denote the set of those indices as $\Omega_{n_{m}}$.
        $3)$ Make the columns of $F^*$ with corresponding indices in set $\Omega_{n_{l}} \cup \Omega_{n_{m}}$ zeros.
        $4)$ We use a notation $f^*_j$ to denote the $j$th column of the matrix $F^*$.
        Fill arbitrary $l$ entries of $f^*_j$ for $j \in \Omega_{n_l}$ with $1$s, and also fill arbitrary $m$ entries of $f^*_j$ for $j \in \Omega_{n_m}$ with $1$s.
        Lastly, we set $F^*=JF^*$ where $J=I_{n}-\frac{1}{n}\mathbbm{1}\mathbbm{1}^T$.

    \item Generate the weight coefficients of the factors $D^*_{ii}$ from the uniform distribution that is supported on $[19,20]$, $ \forall 1 \leq i \leq K$.
        In this way, we can leave the papers with same topic cluster together.

    \item Recall that the positive entries of $S^*$ can characterize the links in the network, which cannot be accounted by the common topics.
        In the spirit of this notion, we construct ad-hoc links in a way that they connect the clusters of papers with one topic.
        Specifically, we assume that there are $\frac{|S^*|}{\binom{K}{2}}$ edges between two clusters, where $|S^*|$ denotes the number of non-zero entries of the upper-triangular part of the matrix $S^*$.
        This can be implemented via the following steps:
        1) We construct $K$ sets $C^*_1,\dots,C^*_K$ that are defined as follows:
        \begin{align*}
           &C^*_{1} \subseteq \big\{1, 2, \dots, \frac{n}{K} \big\} \setminus \big\{ \Omega_{n_l}
           \cup \Omega_{n_m} \big\} \\
           &C^*_{2} \subseteq \big\{\frac{n}{K}+1, \dots, \frac{2n}{K} \big\} \setminus \big\{ \Omega_{n_l} \cup \Omega_{n_m} \big\}  \\
           & \quad \vdots \\
           &C^*_{K} \subseteq \big\{\frac{(K-1)n}{K}+1, \dots, n \big\} \setminus \big\{ \Omega_{n_l}
           \cup \Omega_{n_m} \big\}
        \end{align*}
        where each of them has arbitrary $\frac{|S^*|}{\binom{K}{2}}$ elements.
        2) Create a set $I_{S^*}$ whose elements are pairs of indices such that
        \[
            I_{S^*} =
            \bigg\{ (i_r,j_r) : i_r \in C^*_p, j_r \in C^*_q, 1 \leq p < q \leq K,   r = 1,2,\dots,\frac{|S^*|}{\binom{K}{2}} \bigg\},
        \]
        where we use $i_r$ to indicate $r$th element $i$ in the set.
        3) Draw $S^*_{ij} \sim \mbox{Unif}[19,20], \forall (i,j) \in I_{S^*}$.
        4) Lastly, make it symmetric by setting $S^*_{ji}=S^*_{ij},\forall 1\leq i < j \leq n$.
    \item Create an upper-triangular part of the adjacency matrix $X$ whose each entry $X_{ij}$ follows Bernoulli distribution.
        The distribution's parameter is parametrized by a probability, $P^{*}_{ij}=\frac{\exp (\alpha^{*}+f^{*T}_{i}D^{*}f^{*}_{j} +  S^{*}_{ij})}{1+\exp(\alpha^{*}+f^{*T}_{i}D^{*}f^{*}_{j} +  S^{*}_{ij})}$.
        After drawing all the entries of $X$ in the upper-triangular part, then make the matrix symmetric by setting $X_{ji}=X_{ij},\forall 1 \leq i < j \leq n$.
\end{enumerate}

With these settings in mind, we consider two scenarios, where each of them has three synthetic networks.

\begin{enumerate}
    \item In the first scenario, we consider three networks, in which each of them consists of papers with only one topic.
        Specifically, following three networks
        $\{(n^{(i)},n_{1}^{(i)},K^{(i)},|S^*|^{(i)})\}_{i=1}^{3}$ $=\{(30,30,3,9), (80,80,4,18),$ $(120,120,5,30)\}$ are considered.
        If we take an example, the notation  $(n^{(1)},n_{1}^{(1)},K^{(1)},|S^*|^{(1)})=(30,30,3,9)$ means that we generate a network with $30$ papers. There are $3$ topics embedded in the network, and $9$ random ad-hoc links connect $3$ clusters of papers, where each cluster represents a collection of papers with same topic.

    \item In the second scenario, we consider three networks, in which each of them has some papers that have more than one topic.
        In particular, we consider $\{(n^{(i)},n_{2}^{(i)},n_{3}^{(i)},K^{(i)},|S^{*}|^{(i)})\}_{i=1}^{3}=
        \{(120,0,10,3,18),$ $(210,50,0,3,18),\\(210,10,10,3,18)\}$.
        For instance, in the third case, we have a network with $210$ papers in total. There are $3$ topics commonly shared across the network.
        Among $210$ papers, $10$ papers randomly share $2$ topics out of $3$, other $10$ papers have $3$ mixed topics, whereas remaining $190$ papers only discuss $1$ topic.
        Note that the $3$ clusters from these $190$ papers are connected through $18$ random ad-hoc links.
\end{enumerate}
\begin{figure}[htbp]
\includegraphics[width=1\textwidth]{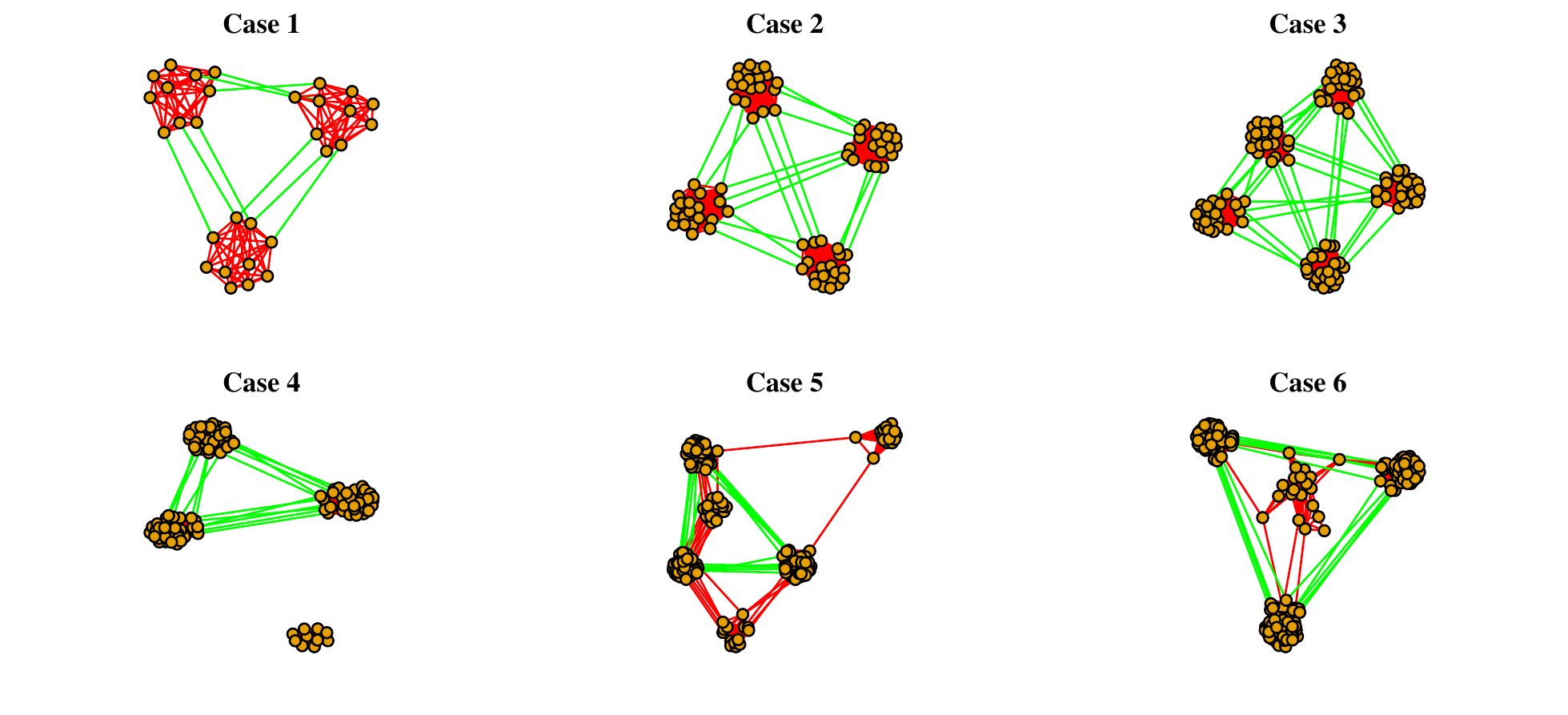}
\caption{Graphical illustrations of six synthetic networks.
Nodes that share the common factors are clustered. The cross cluster links are ad-hoc Citations.
All the graphs are drawn via the algorithm in \cite{fruchterman1991graph}.}
\label{fig:figure1}
\end{figure}

All six networks that are elaborated in scenarios $1$ and $2$ are visualized in Fig.\ref{fig:figure1}.
Notice that the nodes that share the common topics are clustered,
and the cross clustered links are the ad-hoc citations.

\subsubsection{Choosing the tuning parameters and evaluation criteria} \label{sec:Eval-crit}
Choosing a good pair of tuning parameters is an important yet challenging issue in our setting.
Here we present a heuristic procedure for choosing a good pair of tuning parameters $(\gamma,\delta)$.
Following the scree-plot approach in Ji and Jin \cite{ji2016coauthorship}, we plot the largest $15$ eigenvalues of the adjacency matrix $X$, and find an ``elbow'' point where the eigenvalues seem to level off.
An index of the point, which is to the left of this elbow point, is considered as the number of the topics embedded in the network.
(We will denote this number as $\widehat{K}^{\text{Scree}}$.)
We want to note that the scree-plot analysis serves as a good guideline for determining the range of grids to search over.
With the estimate of the number of topics in the network in mind, we record the $\text{rank}(\widehat{L}^{\gamma,\delta})$ and $|\widehat{S}^{\gamma,\delta}|$ (i.e., the cardinality of the set $\widehat{S}^{\gamma,\delta}$) for each tuning parameter pair on a given grid.
We need to go through several iterations of this recording procedure to find a proper range of grid, in which we can get $\text{rank}(\widehat{L}^{\gamma,\delta})=\widehat{K}^{\text{Scree}}$
and $10^{-4}\|X\|_0 \leq |\widehat{S}^{\gamma,\delta}| \leq 10^{-1}\|X\|_0$, via adjusting the range of grid for $\gamma$ and $\delta$ repeatedly.
Here $\|X\|_0$ denotes the number of nonzero entries in the matrix $X$.
Once we find a grid, which satisfies above constraints, we choose a pair of tuning parameters: 
\[
    (\gamma^{\text{Heu}},\delta^{\text{Heu}}):=
    \big\{(\gamma,\delta): \text{rank}(\widehat{L}^{\gamma,\delta})=\widehat{K}^{\text{Scree}},
    \mbox{ mode } |\widehat{S}^{\gamma,\delta}|: \mbox{ subject to } 10^{-4}\|X\|_0 \leq |\widehat{S}^{\gamma,\delta}| \leq 10^{-1}\|X\|_0 \big\}.
\]
One might wonder how the traditional model selection methods, such as the Bayes Information Criterion (BIC;\cite{schwarz1978estimating}) and the Akaike information criterion (AIC), work.
Recall that BIC and AIC are defined as follows:
\[
\mbox{BIC}(M) = -2  \mathbb{L}_n( \hat{\beta}(M)) + |M| \log \bigg(\frac{n(n-1)}{2}\bigg),
\]
and
\[
\mbox{AIC}(M) = -2  \mathbb{L}_n( \hat{\beta}(M)) + 2|M|.
\]
Here $M$ indicates the current model, which is implicitly understood that the model is obtained from certain tuning parameter pair $(\gamma,\delta)$.
We use $\mathbb{L}_n( \hat{\beta}(M))$ to denote the maximal log-likelihood for a given model $M$,
and $|M|$ is the number of free parameters in $M$, which is determined by the number of non-zeros in $\widehat{S}^{\gamma,\delta}$ and the low-rank matrix $\widehat{L}^{\gamma,\delta}$.
In detail, if we have $\text{rank}(\widehat{L}^{\gamma,\delta})=K$, we can establish the following
\[
|M| = \sum_{i < j}1_{\{S_{ij} \neq 0\}} + n K - \frac{K(K-1)}{2} + 1 ;
\]
since the number of free parameters in $\widehat{L}^{\gamma,\delta}$ is $K$ plus $nK - K(K+1)/2$, which is the number of free parameters in determining $K$ orth-normal vectors.
Additional $1$ in the last term is due to $\hat{\alpha}$.
We want to find a pair $(\gamma,\delta)$, which minimizes BIC($M$) or AIC($M$) as a function of $(\gamma,\delta)$, respectively, where we denote them as follows:
\[
    (\gamma^{BIC},\delta^{BIC}):= \mbox{arg min}_{\gamma, \delta}
    \mbox{BIC}(M),
    \quad
    (\gamma^{AIC},\delta^{AIC}):= \mbox{arg min}_{\gamma, \delta}
    \mbox{AIC}(M).
\]
We evaluate the models that are selected via our heuristic approach, BIC, and AIC by using the following four evaluation metrics:
\begin{eqnarray*}
M_1 &=& \mathbbm{1}\big\{\mbox{rank}(\widehat{L}) = \mbox{rank}(L^{*})\big\}, \\
M_2 &=& \frac{\left|\big\{(i,j):i<j:S^{*}_{i,j} \neq0 \quad \& \quad \widehat{S}_{i,j} \neq 0 \big\}\right|}{\left|\big\{(i,j):i<j:S^{*}_{i,j}\neq0\big\}\right|}, \\
M_3 &=& \frac{\left|\big\{(i,j):i<j:S^{*}_{i,j} = 0 \quad \& \quad \widehat{S}_{i,j} \neq 0 \big\}\right|}{\left|\big\{(i,j):i<j:S^{*}_{i,j}=0\big\}\right|}, \\
M_4 &=& \frac{\left|\big\{\mbox{Misclassified Nodes}\big\}\right|}{n},
\end{eqnarray*}
where $M_1$ is a metric on whether the selected model recovers the true low rank structure of network,
$M_2$ evaluates the positive selection rate of the sparse ad-hoc structure in network,
$M_3$ evaluates the false discovery rate of ad-hoc edges, and
$M_4$ calculates the proportion of mis-classified nodes to the entire nodes in the network.
With properly selected tuning parameter, $M_1$ will be 1, $M_2$ will be close to 1, and $M_3$ and $M_4$ will get close to 0.
We present the evaluation results of the six cases via the four criteria, $M_1,M_2,M_3$, and $M_4$ in Table.\ref{tab:table1}.

\begin{table}[htbp]
\centering
    \begin{tabular}{c|ccc|ccc|ccc}
                & \multicolumn{9}{c}{\textbf{Scenario 1}}                                              \\
                \cline{2-10}
                & \multicolumn{3}{c|}{Case 1} & \multicolumn{3}{c|}{Case 2} & \multicolumn{3}{c}{Case 3} \\
                \cline{2-10}
                & Heuristic   & AIC   & BIC  & Heuristic   & AIC   & BIC  & Heuristic   & AIC   & BIC  \\
                \hline
    $M_{1}$   &    1 (3)  &    0 (2)   &  0 (2)  &    1 (4)    &  0 (3)   &  0 (3)  &    1 (5)  &   0 (4)    & 0 (4) \\
    $M_{2}$   &    1     &    0 &      0 &         1    &     1  &    1  &         1    &    1   &   1   \\
    $M_{3}$   &    0     &    0 &      0 &         0    &     0  &    0  &         0    &    0   &   0   \\
    $M_{4}$   &  0  &    10/30    &  10/30     &   0   &   20/80    &   20/80   &  0      &  24/120     &   24/120    \\
                \hline
    \end{tabular}

    \begin{tabular}{c|ccc|ccc|ccc}
                & \multicolumn{9}{c}{\textbf{Scenario 2}}                                              \\
                \cline{2-10}
                & \multicolumn{3}{c|}{Case 4} & \multicolumn{3}{c|}{Case 5} & \multicolumn{3}{c}{Case 6} \\
                \cline{2-10}
                & Heuristic   & AIC   & BIC  &    Heuristic   & AIC   & BIC  & Heuristic   & AIC   & BIC  \\
                \hline
    $M_{1}$   &   1 (3)     &     1 (3)  &  1 (3)    &      1 (3)   &   1 (3)    &  1 (3)    &   1 (3)     &  1 (3)   & 1 (3)   \\
    $M_{2}$   &   17/18   &   0    &    0  &   17/18     &   0    &  0    &  16/18    &   0    &  0    \\
    $M_{3}$   &   0 &     0  &   0   &        0     &    0   &  0    &        0     &   0    & 0 \\
    $M_{4}$   &  0  &   0    &  0    &   0          &   0    &   0   &  0      &  0     &   0   \\
                \hline
    \end{tabular}
    \caption{ For two scenarios, our heuristic method chooses the model with $\widehat{L}$ with true rank, $\widehat{S}$ whose $M_2$ value is close to $1$, and $M_3$ value is close to $0$.
    Also note that it chooses a model whose mis-classification rate is close to $0$.
    A number in the parentheses represents the rank of $\widehat{L}$ estimated from ($\gamma^{\text{Heu}},\delta^{\text{Heu}}$), ($\gamma^{\text{AIC}},\delta^{\text{AIC}}$) and ($\gamma^{\text{BIC}},\delta^{\text{BIC}}$) for each case.}
    \label{tab:table1}
\end{table}

\subsubsection{Several Observations} \label{sec:obs}
\begin{enumerate}
    \item \textbf{Model Selection.}
    First and foremost, choosing a good pair of tuning parameters is critical when it comes to making a good statistical inference on data.
    As presented in Table.\ref{tab:table1}, both BIC and AIC, which are well known for their model selection consistency in asymptotic setting, appear to under-estimate both the number of topics and the number of ad-hoc links in the networks in our synthetic settings. This may be caused by the fact that these traditional methods take the sample size into account, and therefore penalizes the model complexity too harshly.
    In the heuristic approach, scree-plot plays an important role when it comes to recovering the number of topics, and this strategy leads us to good model selection results for all the six cases considered in two scenarios. (See Fig.$\ref{fig:figure2}$)

    \begin{figure}[htbp]
        \includegraphics[width=1\textwidth]{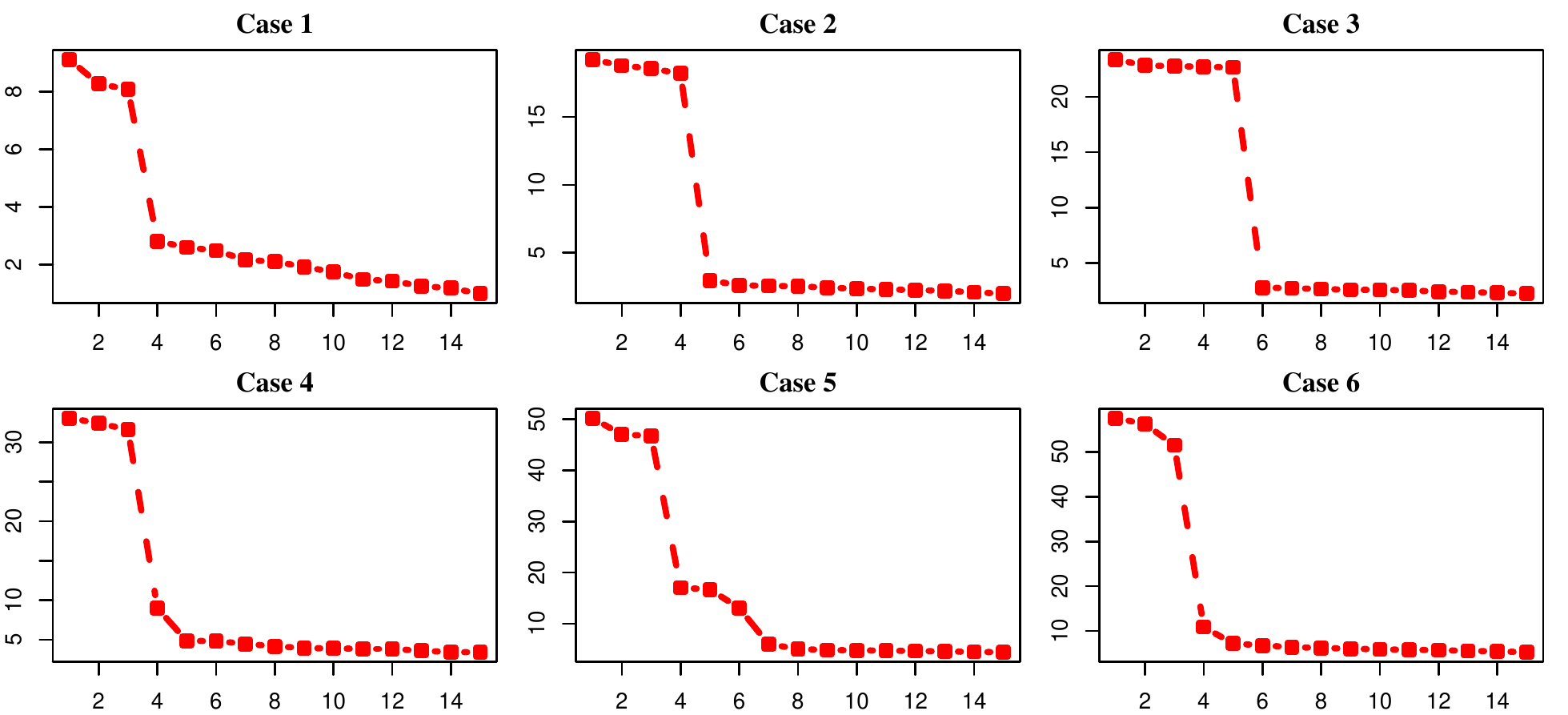}
        \caption{ Scree plots for six synthetic Networks.
        $\widehat{K}^{\mbox{Scree}}$ recovers the number of topics in the network correctly for all six cases.}
        \label{fig:figure2}
    \end{figure}

    \item \textbf{Node Membership.}
    After fitting the model with a proper pair of tuning parameters, $(\gamma,\delta)$, we need to determine whether the $i$th paper belongs to the $k$th topic or not.
    We apply a simple $k$-means clustering algorithm on the $\widehat{L}$ matrix's  $K$ eigenvectors where $K$ denotes the rank of matrix $\widehat{L}$.
    For the three cases in Scenario $1$, where each paper in the network only belongs to one topic, we confirm that $k$-means algorithm performs well on classifying papers in the network.
    However, in Scenario $2$ where we allow the papers in the network can have more than one topics, naive implementation of the $k$-means algorithm entails a problem -- it is not able to cope with the overlapped membership of nodes.
    In this case, we create a matrix $\widehat{E}_{K}\in \mathbb{R}^{n \times K}$, whose $i$th column corresponds to the $i$th eigenvector of the matrix $\widehat{L}$.
    In order to obtain a sense on how many clusters of papers exist in the latent space, we project each row of the $\widehat{E}_{K}$ on the first and second principal components of data matrix $\widehat{E}_{K}$, and plot the projected points on a $2$-dimensional plane.
    We count the number of distinct clusters plotted on the plane.
    Subsequently, we run the $k$-means algorithm on the projected points.
    Table.\ref{tab:table1} and Fig.\ref{fig:figure3} illustrate the result of the above procedure.
    They result seem to be consistent with the underlying truth.

    \begin{figure}[htbp]
    \centering
    \includegraphics[width=1\textwidth]{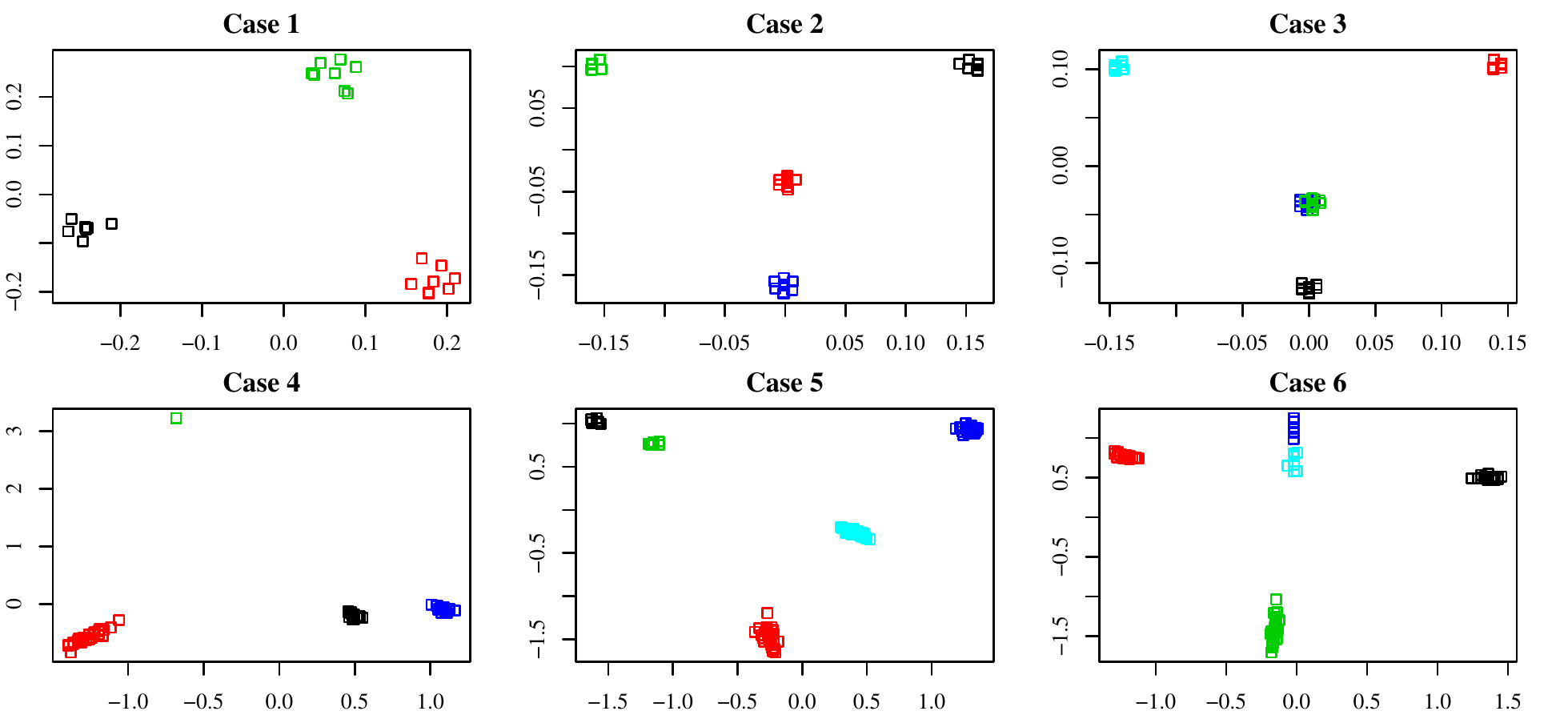}
    \caption{ Case $1\sim3$ : Plots of rows from the first two eigenvectors of $\widehat{L}^{\gamma^{\text{Heu}},\delta^{\text{Heu}}}$.
    Case $4 \sim 6$ : Plots of the projected points on the first (X-axis) and second (Y-axis) principal component of the data matrix $\widehat{E}_{K}^{\gamma^{\text{Heu}},\delta^{\text{Heu}}}$. Different colors represent different clusters of papers that $k$-means algorithm assigns.}
    \label{fig:figure3}
    \end{figure}

\end{enumerate}

\subsection{Citation networks for statisticians}
\label{sec:citation_network}
\begin{figure}[htbp]
\includegraphics[width=1\textwidth]{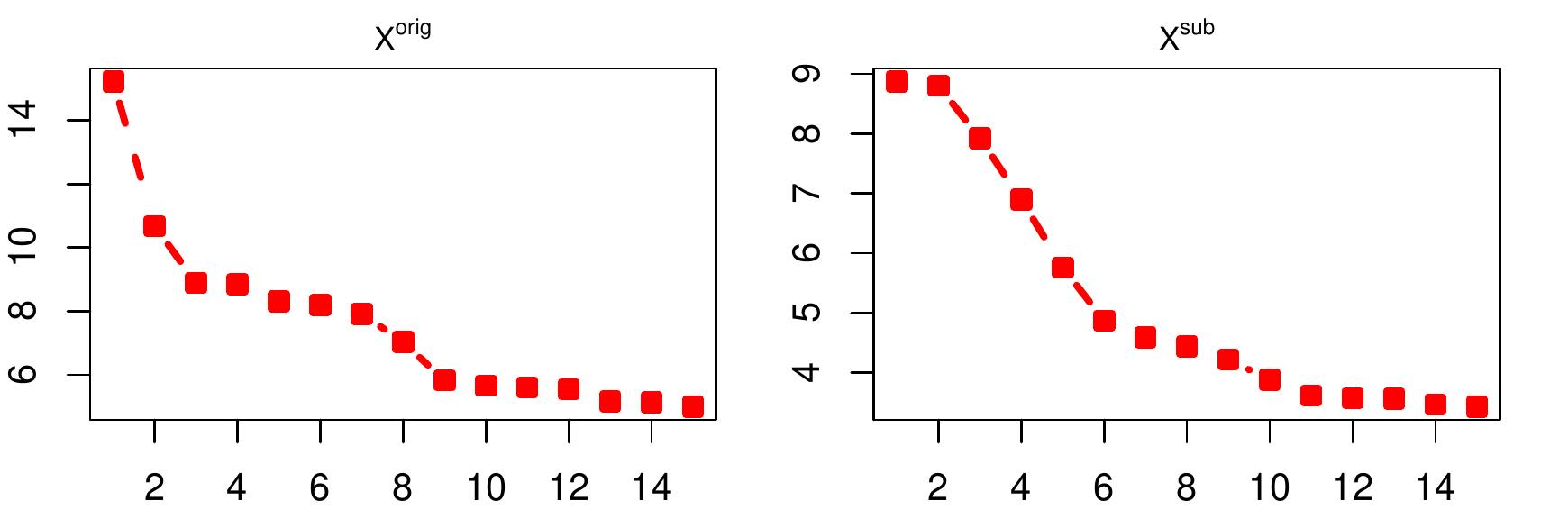}
\caption{From left to right : Scree plots of the adjacency matrix $X^{\mbox{orig}}$ and $X^{\mbox{sub}}$.}
\label{fig:figure4}
\end{figure}
Recently, Ji and Jin \cite{ji2016coauthorship} published an interesting dataset on citation network of papers from statistics journals.
Specifically, this dataset is based upon all papers published from $2003$ to the first half of $2012$, from the four top statistical journals: Annals of Statistics, Biometrika, Journal of American Statistical Association, and Journal of Royal Statistical Society (Series B).
Citational relationships of $3248$ papers are given in the form of adjacency matrix.
In our analysis, we focus our attentions on the papers which have greater than or equal to $10$ citational edges in the network of Ji and Jin \cite{ji2016coauthorship}.
After collecting papers with greater than or equal to $10$ citational edges and eliminating those that have no connecting edges from the rest, we have $232$ papers in total.

We denote the adjacency matrix of these $232$ papers as $X^{\mbox{orig}}$.
Elbow points of the scree plot from $X^{\mbox{orig}}$ may be at the $3$rd, $5$th, or $9$th largest eigenvalue, suggesting that there are from $2$ to $8$ embedded topics in the network. (See Fig. \ref{fig:figure4})
In light of this, we conduct the analysis in the following two steps:
\begin{enumerate}
    \item First, we assume that the network $X^{\mbox{orig}}$ has $2$ distinct topics and one giant mixed-component, which has a sub-network structure. Under this assumption, we set $\widehat{K}^{\mbox{scree}}$ as $3$, and select a proper model via our heuristic method.
        Then, we perform a $k$-means algorithm on matrix $\widehat{E}_{3}$ treating each row of the matrix as one data point.
        Note that we set the total number of clusters in the network as $3$ when we run the clustering algorithm.
    \item Next, we restrict the network to the giant component ignoring all the edges to/from outside and obtain a subnetwork.
        We denote the adjacency matrix of this subnetwork as $X^{\mbox{sub}}$. We set $\widehat{K}^{\mbox{scree}}$ as $5$, and also select a proper model through the heuristic method.
        Here, we run the $k$-means algorithm on $\widehat{E}_{5}$ setting the number of clusters as $5$.
\end{enumerate}

In the first step, a pair of parameters, $(\gamma^{\text{Heu}},\delta^{\text{Heu}}) = (0.0021094, 0.01913)$, gives us $\widehat{L}$ with rank $3$, and $\widehat{S}$ with $|\widehat{S}|=51$.
We list the first two topics discovered through our analysis.
\begin{itemize}
    \item Variable selection (VarSel), which includes $43$ paper.
    \item Multiple Hypothesis Testing (MulT), which includes $31$ papers.
\end{itemize}
The first topic studies on Variable Selection with high-dimensional data.
The second topic discusses Controlling False Discovery Rate in various statistical settings.
The third group, which consists of $158$ papers, is hard to interpret and appears to have sub-network structures.
For further investigation, we set this group as a giant component in the network, and denote the corresponding component's adjacency matrix as $X^{\mbox{sub}}$.
We perform a model selection as described in aforementioned Step $2$.
A pair of tuning parameters, $(\gamma^{\text{Heu}},\delta^{\text{Heu}}) = (0.00312, 0.0146)$, gives us the model with $\widehat{L}$ with rank $5$, and $\widehat{S}$ with $|\widehat{S}|=88$,
and we can obtain five sub-communities as follows:
\begin{itemize}
    \item Non-parametric Bayesian Statistics (NonPar), which includes $15$ papers.
    \item Functional / Longitudinal Data analysis (FuncAn), which includes $16$ papers.
    \item Dimension Reduction (DimRed), which includes $14$ papers.
    \item High-dimensional Covariance Estimation (CovEst), which includes $15$ papers.
    \item Mixed Topics (Mixed), which includes $98$ papers.
\end{itemize}
From the sub-network $X^{\mbox{sub}}$, we got four meaningful topics: Bayesian Statistics, Functional/Longitudinal Data Analysis, Dimension Reduction, and High-dimensional Covariance Estimation.
Due to the small volume of each community, we could manually check that the false discovery for each community is all zero.
(Full list of papers for each community is provided in {\em https://sites.google.com/site/namjoonsuh/publications}.)

The sub-network structure has also a big collection of papers that we refer to as the ``Mixed Topics'' cluster.
Not only could we see the papers with topics on Learning Theory, Non-parametric / Semi-parametric Statistics, Spatial Statistics, Theoretical Machine Learning, which does not seem to belong to any of the five communities listed above, but also we could identify the papers with combinations of two or three topics.
Papers, such as The Bayesian Lasso (T. Park, et al. $2008$), Coordinate-independent sparse sufficient dimension reduction and variable selection (X. Chen, et al. $2010$), are the examples of these papers.
It is also interesting to think about a reason on papers that seem to have obvious membership in one of $5$ communities other than Mixed Topic classified as Mixed Topic.
For instance, the paper, On the ``degrees of freedom'' on the LASSO (H. Zou, et al. $2007$), is classified as Mixed Topic paper.
We can simply guess model selection has lots of applications in other topics, so it might cite or have been cited by many papers in other communities.
Actually, out of $11$ citation relationships it has with other papers, $6$ of them came from the relationships with papers from Mixed Topics.

Non-zero components of $\widehat{S}$ capture the citation relationships among papers that are not attributable to the common topics.
The selected model has $51$ sparse edges, and all of them are positive edges.
In Table \ref{tab:table2}, we provide $10$ pairs of papers that have the largest estimated $\widehat{S}_{ij}$.
All the $10$ edges come from the pairs of papers from different topics.
For instance, the first pair of papers comes from the Functional Analysis topic and Variable Selection topic.
The paper from Functional Analysis topic cites the paper from Variable Selection for borrowing a mathematical representation to build a theorem.
Though it might appear to be a crucial step for building a theorem in their paper, we cannot say that two papers are closely related in terms of topic.
The second pair of papers comes from the Mixed Topics community and the Variable Selection community.
This case is interesting since both papers study about variable selection problem, but they are classified in different communities and connected via an ad-hoc link.
Specifically, the authors in the paper from Mixed Topics community study the variable selection problem under the Non-parametric Bayesian framework, and compare their method with the ``Adaptive Lasso'' that is proposed in the paper from Variable Selection topic.



\begin{table}[htbp]
\centering
\begin{tabular}{clcl}\hline
Pair & Community & Title  & \\
\hline
1    & FuncAn    & Properties of principal component methods for functional and longitudinal data analysis       & \\
     & VarSel    & Nonconcave penalized likelihood with a diverging number of parameters                        & \\
2    & VarSel      & The adaptive lasso and its oracle properties                              & \\
     & Mixed    & Nonparametric Bayes conditional distribution modeling with variable selection     & \\
3    & DimRed    & Contour projected dimension reduction                                                      & \\
     & VarSel    & Factor profiled sure independence screening & \\
4    & VarSel    & Factor profiled sure independence screening & \\
     & DimRed    & Sliced regression for dimension reduction  & \\
5    & CovEst    & Two sample tests for high-dimensional covariance matrices & \\
     & VarSel      & The sparsity and bias of the {LASSO} selection in high-dimensional linear regression         & \\
6    & MulT    & Innovated higher criticism for detecting sparse signals in correlated noise      & \\
     & CovEst     & Regularized estimation of large covariance matrices   &  \\
7    & DimRed    & A constructive approach to the estimation of dimension reduction directions       & \\
     & VarSel     & Factor profiled sure independence screening   & \\
8    & VarSel    & A majorization-minimization approach to variable selection using spike and slab priors & \\
     & Mixed      & Empirical {B}ayes selection of wavelet thresholds               & \\
9    & Mixed    & Nonparametric inferences for additive models          & \\
     & VarSel    & Nonparametric independence screening in sparse ultra-high-dimensional additive model  & \\
10   & VarSel    & Sure independence screening in generalized linear models with {NP}-dimensionality & \\
     & Mixed     & Maximum likelihood estimation in semi-parametric regression models with censored data & \\
\hline \\
\end{tabular}

\begin{tabular}{ccccc}
\hline
Pair $1$                                                                                 & Pair $2$                                                                       & Pair $3$                                                                             & Pair $4$                                                                            & Pair $5$                                                                              \\ \hline
\begin{tabular}[c]{@{}c@{}}P. Hall, et al. $2006$\\ J. Fan, et al. $2004$\end{tabular}     & \begin{tabular}[c]{@{}c@{}}H. Zou. $2006$\\ Y. Chung, et al. $2009$\end{tabular} & \begin{tabular}[c]{@{}c@{}}R. Luo, et al. $2009$\\ H. Wang. $2012$\end{tabular}        & \begin{tabular}[c]{@{}c@{}}H. Wang. $2012$\\ H. Wang, et al. $2012$\end{tabular}      & \begin{tabular}[c]{@{}c@{}}J. Li, et al. $2012$\\ CH. Zhang, et al. $2008$\end{tabular} \\ \hline
Pair $6$                                                                                 & Pair $7$                                                                       & Pair $8$                                                                             & Pair $9$                                                                            & Pair $10$                                                                             \\ \hline
\begin{tabular}[c]{@{}c@{}}P. Hall, et al. $2010$\\ PJ. Bickel, et al. $2008$\end{tabular} & \begin{tabular}[c]{@{}c@{}}Y. Xia. $2007$\\ H. Wang. $2012$\end{tabular}         & \begin{tabular}[c]{@{}c@{}}TJ. Yen. $2011$\\ IM. Johnstone, et al. $2005$\end{tabular} & \begin{tabular}[c]{@{}c@{}}J. Fan, et al. $2005$\\ J. Fan, et al. $2011$\end{tabular} & \begin{tabular}[c]{@{}c@{}}J. Fan, et al. $2010$\\ D. Zeng, et al. $2007$\end{tabular}
\\ \hline
\end{tabular}
\caption{Top $10$ edges corresponding with the pairs of papers from different communities.
Authors and years of publication for the papers in each pair are also presented.
In the first pair, a $\emph{FuncAn}$ paper cites a $\emph{VarSel}$ paper for borrowing a mathematical representation to build a theorem.
But they are not related in terms of {\it topic}.}
\label{tab:table2}
\end{table}

\section{Discussion}
\label{sec:Con}

We propose a new model that combines the latent factors and a sparse graphical structure.
We consider the regularized likelihood by means of the $L_1$ norm and the nuclear norm penalties.
The computation of the regularized estimator is facilitated by developing an algorithm that is based on the alternating direction method of multiplier (ADMM), which optimizes a non-smooth however convex objective function.
The proposed method is applied to a citation network of statisticians, and the estimated model renders some meaningful interpretations.
We believe that our analysis on statistician's citation network sheds some new light on the interpretation of the data set.

There are still several questions remaining to be answered.
First of all, it remains unclear on how to choose the tuning parameters.
Classical methods for choosing tuning parameters such as BIC or AIC tend to choose the most parsimonious models, which lead to an underestimation in our case.
We also do not have systematic ways to do cross-validation in our network data.
Not only because it is computationally expensive, but also because if we partition the network data, we can loose fair amount of information on dependent structures among the edges.
This problem is also closely related to determining the number of communities in network.
In lieu of using BIC or AIC, our analysis is heavily relying on a heuristic approach when choosing the tuning parameter, and during this procedure, we use the scree-plot to determine the number of communities in a network.
Screeplot approach works well in general situation, but it does not necessarily always guarantee the correct estimate of number of communities.
We need a more systematic way to choose the parameters.
And it will be nice to derive some theoretical guarantees for the methods.

Secondly, we only consider an undirected graph, which is somewhat inconsistent with a real citation network, which is directional.
Since, in our research, we were interested in separating the low rank structure of edges and ad-hoc links in network, we did not take into account the directions of edges in our model.
However, it would be interesting to study a similar problem on a directed graph.
This is a future work.

Last but not least, when we assign the memberships of each nodes, we adopt the $k$-means clustering algorithm.
However, we notice that the $k$-means algorithm tends to assign nodes conservatively to each communities.
For example, in Fig. \ref{fig:figure4} (left), we can see that a bunch of Multiple Testing papers are assigned as Mixed cluster, and in Fig. \ref{fig:figure4} (right), many papers that should have been classified to three communities other than the Mixed topic, have been assigned into the Mixed topic community.
It would be interesting to experiment on some other clustering methods that accommodate overlapping memberships.

\section{Appendix}

In this Section, first we briefly introduce several notations, including a notion on the decomposability of regularizer, and a useful lemma that is proved in the work \cite{agarwal2012noisy} (Section\ref{App:Pre}).
Then, we present Lemma \ref{le:le2} and its proof (Section \ref{App:Lemma2}).
Finally, we present the proof of our Theorem \ref{Th:th1} (Section \ref{App:Theorem1}).

\subsection{Preliminary} \label{App:Pre}
Throughout the proof, we adopt the convenient short-hand notation on projection of matrix $P$ on subspace $M$ as $P_{M}$.
We use $\langle A,B \rangle$ to denote the trace inner product of two matrices $A$ and $B$ \big(i.e.,$\langle A,B \rangle=\mbox{tr}\big(A^T B\big)$\big).
We use $\|A\|_{\infty}$ to denote the maximum absolute entry of matrix $A$, and use $\|B\|_{op}$ to denote the largest singular value of matrix $B$.
And we will use the notion of decomposability of $L_1$ norm with respect to a pair of subspace $(M,M^{\perp})$.
Given an arbitrary subset $S \subseteq \{1,2,\dots,n\} \times \{1,2,\dots,n\}$ of matrix indices, $M$ is defined as follows:
\[
    M(S):=\{ U \in \mathbb{R}^{n \times n} | U_{ij}=0, \forall (i,j) \in S \}
\]
and $M^{\perp}(S):=(M(S))^{\perp}$.
With this in mind, we recall the formal definition of the decomposability of $L_1$ norm as follows:
\begin{definition}
    Given a subspace $M \subset \mathbb{R}^{n \times n}$ and its orthogonal complement $M^{\perp}$, an elementwise $L_1$ norm
    is decomposable with respect to $(M, M^{\perp})$ if
    \[
        \| A + B \|_{1} = \|A\|_{1} + \|B\|_{1},
        \forall A \in M \mbox{ and } B \in M^{\perp}.
    \]
\end{definition}
The notion of decomposability is used to penalize the perturbation from the model subspace $M$, and to obtain the tightest bound the $L_1$ norm can achieve.
We will also use two results in our proof, which are presented and proved in the work \cite{agarwal2012noisy}.
For the convenience of readers, we present them here:
\begin{lemma} \label{eq:Lmag}
(\textbf{Agarwal, et al} \cite{agarwal2012noisy})
For any $k=1,2,\dots,n$, there is a decomposition $\widehat{\Delta}^{L} =\widehat{\Delta}^{L}_{A} + \widehat{\Delta}^{L}_{B}$ such that:
\begin{enumerate}
 \item The decomposition satisfies
    \begin{equation}\label{eq:Lm1}
        \mbox{rank}\big(\widehat{\Delta}^{L}_{A}\big) \leq 2k, \quad and \quad \big(\widehat{\Delta}^{L}_{A}\big)^{T}\widehat{\Delta}^{L}_{B}
        = \big(\widehat{\Delta}^{L}_{B}\big)^{T}\widehat{\Delta}^{L}_{A} =0
    \end{equation}
 \item The difference $\mathbb{Q}\big(L^{*},S^{*}\big) -  \mathbb{Q}\big(\widehat{\Delta}^L + L^{*},\widehat{\Delta}^S + S^{*}\big)$ is upper-bounded by
    \begin{equation} 
        \mathbb{Q}\big(\widehat{\Delta}^L_{A},\widehat{\Delta}^S_{M}\big) - \mathbb{Q}\big(\widehat{\Delta}^L_{B},\widehat{\Delta}^S_{M^\perp}\big)
        +2 \sum_{j=k+1}^{n} \sigma_{j}\big(L^*\big) + 2\frac{\gamma}{\delta}\left\|S^*_{M^\perp}\right\|_{1},
    \end{equation}
\end{enumerate}
\end{lemma}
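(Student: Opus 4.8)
Since this lemma is taken from \cite{agarwal2012noisy}, the plan is to reproduce its proof in our notation. Write $\mathbb{Q}(A,B)=\|A\|_{\ast}+\frac{\gamma}{\delta}\|B\|_{1}$, so that the penalty in \eqref{eq:Modified} equals $\delta\,\mathbb{Q}(L,S)$, and recall $\widehat{\Delta}^{L}+L^{*}=\widehat{L}$ and $\widehat{\Delta}^{S}+S^{*}=\widehat{S}$; thus the quantity to be bounded in part (2) equals $\big(\|L^{*}\|_{\ast}-\|\widehat{L}\|_{\ast}\big)+\frac{\gamma}{\delta}\big(\|S^{*}\|_{1}-\|\widehat{S}\|_{1}\big)$. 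First I would build the splitting of $\widehat{\Delta}^{L}$ from the leading singular subspaces of $L^{*}$, which yields part (1); then I would bound the two differences above separately, using decomposability of the nuclear norm and of the elementwise $\ell_{1}$ norm respectively, and finally add.

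For the splitting, fix $k$ and take a singular value decomposition $L^{*}=U\Sigma V^{T}$ with leading rank-$k$ blocks $U_{k},V_{k}$. Following \cite{agarwal2012noisy}, set $\mathcal{N}:=\{Z\in\mathbb{R}^{n\times n}:\mbox{col}(Z)\perp\mbox{span}(U_{k}),\ \mbox{row}(Z)\perp\mbox{span}(V_{k})\}$, let $\widehat{\Delta}^{L}_{B}$ be the orthogonal (Frobenius) projection of $\widehat{\Delta}^{L}$ onto $\mathcal{N}$, and put $\widehat{\Delta}^{L}_{A}:=\widehat{\Delta}^{L}-\widehat{\Delta}^{L}_{B}$. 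Standard properties of this pair give $\mbox{rank}(\widehat{\Delta}^{L}_{A})\le 2k$, and since $\widehat{\Delta}^{L}_{B}$ has row and column spaces orthogonal to those of $\widehat{\Delta}^{L}_{A}$, both $(\widehat{\Delta}^{L}_{A})^{T}\widehat{\Delta}^{L}_{B}$ and $(\widehat{\Delta}^{L}_{B})^{T}\widehat{\Delta}^{L}_{A}$ vanish; together these two facts are \eqref{eq:Lm1}. That same orthogonality also supplies the additivity we need below: $\|A+\widehat{\Delta}^{L}_{B}\|_{\ast}=\|A\|_{\ast}+\|\widehat{\Delta}^{L}_{B}\|_{\ast}$ whenever the column and row spaces of $A$ lie in $\mbox{span}(U_{k})$ and $\mbox{span}(V_{k})$.

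To bound the low-rank difference, let $L^{*}_{(k)}$ be the best rank-$k$ truncation of $L^{*}$, so that $\|L^{*}\|_{\ast}=\|L^{*}_{(k)}\|_{\ast}+\sum_{j=k+1}^{n}\sigma_{j}(L^{*})$ and $L^{*}_{(k)}$ has column and row spaces inside $\mbox{span}(U_{k})$ and $\mbox{span}(V_{k})$. Grouping $\widehat{L}=\big(L^{*}_{(k)}+\widehat{\Delta}^{L}_{B}\big)+\big(\widehat{\Delta}^{L}_{A}+L^{*}-L^{*}_{(k)}\big)$, applying the additivity above to the first bracket and the triangle inequality to the second gives $\|\widehat{L}\|_{\ast}\ge\|L^{*}_{(k)}\|_{\ast}+\|\widehat{\Delta}^{L}_{B}\|_{\ast}-\|\widehat{\Delta}^{L}_{A}\|_{\ast}-\sum_{j=k+1}^{n}\sigma_{j}(L^{*})$, which rearranges to
\[
\|L^{*}\|_{\ast}-\|\widehat{L}\|_{\ast}\ \le\ \|\widehat{\Delta}^{L}_{A}\|_{\ast}-\|\widehat{\Delta}^{L}_{B}\|_{\ast}+2\sum_{j=k+1}^{n}\sigma_{j}(L^{*}).
\]

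The sparse difference is handled identically, with $\|\cdot\|_{\ast}$ replaced by $\|\cdot\|_{1}$ and the singular subspaces replaced by the decomposable pair $(M,M^{\perp})$ attached to a fixed index set of cardinality at most $s$: decomposability of $\|\cdot\|_{1}$ together with $\|S^{*}\|_{1}=\|S^{*}_{M}\|_{1}+\|S^{*}_{M^{\perp}}\|_{1}$ gives $\|S^{*}\|_{1}-\|\widehat{S}\|_{1}\le\|\widehat{\Delta}^{S}_{M}\|_{1}-\|\widehat{\Delta}^{S}_{M^{\perp}}\|_{1}+2\|S^{*}_{M^{\perp}}\|_{1}$. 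Multiplying this by $\gamma/\delta$, adding it to the displayed low-rank bound, and regrouping through the definition of $\mathbb{Q}$ gives exactly the inequality asserted in part (2). The only genuinely delicate point is the subspace construction of the second paragraph: one must verify that the single split $\widehat{\Delta}^{L}=\widehat{\Delta}^{L}_{A}+\widehat{\Delta}^{L}_{B}$ simultaneously has $\mbox{rank}(\widehat{\Delta}^{L}_{A})\le 2k$, satisfies the two-sided orthogonality of \eqref{eq:Lm1}, and makes the nuclear norm additive against $L^{*}_{(k)}$; once these are in place, the rest is routine triangle-inequality bookkeeping, so one could alternatively just cite the corresponding lemma and its proof in \cite{agarwal2012noisy}.
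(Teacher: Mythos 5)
The paper itself does not prove this lemma---it is quoted verbatim from Agarwal et al.\ \cite{agarwal2012noisy} ``for the convenience of readers''---so the relevant comparison is with the original argument, which your proposal reconstructs. Your proof of part (2) is correct and is exactly that standard argument: the split $\widehat{\Delta}^{L}_{B}=(I-P_{U_k})\widehat{\Delta}^{L}(I-P_{V_k})$, $\widehat{\Delta}^{L}_{A}=\widehat{\Delta}^{L}-\widehat{\Delta}^{L}_{B}$ gives $\mbox{rank}(\widehat{\Delta}^{L}_{A})\le 2k$; the additivity $\|L^{*}_{(k)}+\widehat{\Delta}^{L}_{B}\|_{\ast}=\|L^{*}_{(k)}\|_{\ast}+\|\widehat{\Delta}^{L}_{B}\|_{\ast}$ is valid because $L^{*}_{(k)}(\widehat{\Delta}^{L}_{B})^{T}=(L^{*}_{(k)})^{T}\widehat{\Delta}^{L}_{B}=0$; combined with the triangle inequality for $L^{*}-L^{*}_{(k)}$, the decomposability of the elementwise $\ell_{1}$ norm over $(M,M^{\perp})$, and the weighted sum defining $\mathbb{Q}$, this yields precisely the bound in part (2), which is the only part of the lemma the proof of Theorem \ref{Th:th1} actually uses.

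There is, however, a genuine gap in your justification of the orthogonality claim in part (1). For your construction, $\widehat{\Delta}^{L}_{A}=P_{U_k}\widehat{\Delta}^{L}+(I-P_{U_k})\widehat{\Delta}^{L}P_{V_k}$, and its row and column spaces are \emph{not} orthogonal to those of $\widehat{\Delta}^{L}_{B}$; a direct computation gives $(\widehat{\Delta}^{L}_{A})^{T}\widehat{\Delta}^{L}_{B}=P_{V_k}(\widehat{\Delta}^{L})^{T}(I-P_{U_k})\widehat{\Delta}^{L}(I-P_{V_k})$, which is nonzero in general: take $n=2$, $k=1$, $U_{k}=V_{k}=e_{1}$ and $\widehat{\Delta}^{L}=\bigl(\begin{smallmatrix} a & b\\ c & d\end{smallmatrix}\bigr)$ with $cd\neq 0$, so the step ``since $\widehat{\Delta}^{L}_{B}$ has row and column spaces orthogonal to those of $\widehat{\Delta}^{L}_{A}$, both products vanish'' fails. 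What the projection split actually delivers is the Frobenius orthogonality $\langle\widehat{\Delta}^{L}_{A},\widehat{\Delta}^{L}_{B}\rangle=0$ together with the two-sided orthogonality of $\widehat{\Delta}^{L}_{B}$ against the retained rank-$k$ part $L^{*}_{(k)}$ (the relations your part-(2) argument genuinely uses); the matrix-product identity as written in \eqref{eq:Lm1} mirrors Lemma 3.4 of Recht--Fazel--Parrilo, where the products involve the reference low-rank matrix rather than $\widehat{\Delta}^{L}_{A}$. So either weaken the orthogonality claim to trace orthogonality, or state it relative to $L^{*}_{(k)}$; since nothing downstream uses \eqref{eq:Lm1} beyond the rank bound, this repair does not affect Theorem \ref{Th:th1}.
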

where the notation $\mathbb{Q}(L,S)$ is defined as the weighted combination of the two regularizers for any pair of positive tuning parameters $(\gamma,\delta)$:
\[
\mathbb{Q}\left(L,S\right)   := \left\|L\right\|_{*} + \frac{\gamma}{\delta}\left\|S\right\|_1.
\]

\subsection{Lemma \ref{le:le2}} \label{App:Lemma2}
\begin{lemma} \label{le:le2}
If a pair of regularization parameters $(\delta,\gamma)$ satisfies condition (\ref{eq:49}), then for $\mathbb{Q}\left(\widehat{\Delta}^L_{B},\widehat{\Delta}^S_{M^\perp}\right)$, we have
\[
    \mathbb{Q}\left(\widehat{\Delta}^L_{B},\widehat{\Delta}^S_{M^\perp}\right) \leq
    \left\|\hat{\Delta}^{\alpha}\mathbbm{1}\mathbbm{1}^T\right\|_{F} +
    3\mathbb{Q}\left(\widehat{\Delta}^{L}_{A},\widehat{\Delta}^{S}_{M}\right)+4 \sum_{j=k+1}^{n} \sigma_{j}\big(L^{*}\big) + 4\frac{\gamma}{\delta}
    \left\|S^{*}_{M^\perp}\right\|_{1}.
\]
\end{lemma}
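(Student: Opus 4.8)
The plan is to start from the basic optimality inequality for the convex program (\ref{eq:Modified}): since $(\widehat{\alpha},\widehat{L},\widehat{S})$ is a minimizer and $(\alpha^*,L^*,S^*)$ is feasible, the objective at the former is no larger than at the latter. Writing this out and rearranging, the difference of the log-likelihood terms is bounded by $\delta\big(\|L^*\|_* - \|\widehat{L}\|_*\big) + \gamma\big(\|S^*\|_1 - \|\widehat{S}\|_1\big)$. On the left-hand side I would use a first-order Taylor expansion of $h$ around $\Theta^*$ together with the $\tau$-strong convexity from Assumption \ref{Ass:1}; this produces a nonnegative quadratic term $\tfrac{\tau}{2}\|\widehat{\Delta}^\Theta\|_F^2$ plus the inner product $\langle \nabla h(\Theta^*), \widehat{\Delta}^\Theta\rangle$. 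The quadratic term can be dropped (it only helps), so the whole inequality becomes a bound involving $\langle \nabla h(\Theta^*), \widehat{\Delta}^\Theta\rangle$ and the regularizer differences. Here $\nabla h(\Theta^*) = \tfrac1n(X - P^*)$ by direct computation of the logistic gradient.

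Next I would decompose $\widehat{\Delta}^\Theta = \widehat{\Delta}^\alpha\mathbbm{1}\mathbbm{1}^T + \widehat{\Delta}^L + \widehat{\Delta}^S$ and split the inner product accordingly. For the $L$-part I apply the trace-duality bound $|\langle A, B\rangle| \le \|A\|_{op}\|B\|_*$ and the hypothesis $\delta \ge 2\|\tfrac1n(X-P^*)\|_{op}$; for the $S$-part I use $|\langle A,B\rangle| \le \|A\|_\infty\|B\|_1$ together with the lower bound on $\gamma$ in (\ref{eq:49}), being careful that the extra additive term $4\kappa\tau\big(\tfrac{Cn+1}{n}\big)$ in the threshold for $\gamma$ is what absorbs the $\widehat{\Delta}^\alpha$ contribution coming from the $S$-inner-product and from the spikiness/boundedness Assumption \ref{Ass:3} (using $\|L^*\|_\infty, \|\widehat L\|_\infty \le \kappa/n$ and $|\alpha^*|,|\widehat\alpha|\le C\kappa$). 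The $\alpha$-part of the inner product I would bound by $\|\widehat{\Delta}^\alpha\mathbbm{1}\mathbbm{1}^T\|_F$ times $\|\tfrac1n(X-P^*)\|$ in an appropriate norm, or carry it along to appear on the right-hand side of the final statement. Then I invoke triangle inequalities for $\|\cdot\|_*$ and $\|\cdot\|_1$ (e.g. $\|\widehat L\|_* \ge \|L^*_{\mathrm{top}\text{-}k}\|_* - \ldots$) and the decomposability of the $L_1$ norm with respect to $(M, M^\perp)$, exactly as in Lemma \ref{eq:Lmag}, to convert the differences $\|L^*\|_* - \|\widehat L\|_*$ and $\|S^*\|_1 - \|\widehat S\|_1$ into the structured quantities $\mathbb{Q}(\widehat{\Delta}^L_A, \widehat{\Delta}^S_M)$, $-\mathbb{Q}(\widehat{\Delta}^L_B, \widehat{\Delta}^S_{M^\perp})$, $\sum_{j=k+1}^n \sigma_j(L^*)$, and $\tfrac{\gamma}{\delta}\|S^*_{M^\perp}\|_1$.

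Collecting everything, all terms proportional to $\mathbb{Q}(\widehat{\Delta}^L_B, \widehat{\Delta}^S_{M^\perp})$ should be moved to one side; the choice of thresholds in (\ref{eq:49}) is calibrated so that its coefficient is positive (morally, $\delta$ dominates twice the operator norm of the noise and similarly for $\gamma$), which lets me divide through and land on the claimed bound with the constant $3$ in front of $\mathbb{Q}(\widehat{\Delta}^L_A, \widehat{\Delta}^S_M)$, the constant $4$ in front of the two approximation-error terms, and the lone $\|\widehat{\Delta}^\alpha\mathbbm{1}\mathbbm{1}^T\|_F$ term carried over from the $\alpha$-part of the noise inner product. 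The main obstacle I anticipate is the careful bookkeeping around the $\alpha$ component: because $\mathbbm{1}\mathbbm{1}^T$ is not centered (it is orthogonal to the range of $L$ by Assumption \ref{Ass:2} but interacts with $S$), one must track how $\langle \tfrac1n(X-P^*), \widehat{\Delta}^\alpha\mathbbm{1}\mathbbm{1}^T\rangle$ and the cross term between $\widehat{\Delta}^\alpha\mathbbm{1}\mathbbm{1}^T$ and $\widehat{\Delta}^S$ get controlled — this is precisely where the $|\alpha|\le C\kappa$ constraint and the extra $4\kappa\tau\big(\tfrac{Cn+1}{n}\big)$ slack in the $\gamma$-threshold enter, and getting the constants to close exactly is the delicate part. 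Everything else is the standard $M$-estimation / Fano-style decomposition argument adapted from \cite{agarwal2012noisy}.
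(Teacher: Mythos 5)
Your proposal follows essentially the same route as the paper's proof: the basic optimality inequality, a first-order convexity lower bound with $\nabla h(\Theta^*) = -\tfrac{1}{n}(X-P^*)$, dual-norm bounds on the noise inner product (operator norm against the nuclear norms of $\widehat{\Delta}^{\alpha}\mathbbm{1}\mathbbm{1}^T$ and $\widehat{\Delta}^L$, sup-norm against $\|\widehat{\Delta}^S\|_1$) using the thresholds in (\ref{eq:49}), the second part of Lemma \ref{eq:Lmag} to handle the regularizer difference, and a final rearrangement producing the constants $1,3,4,4$. The only minor slips — the sign of the gradient and your claim that the extra $4\kappa\tau\big(\tfrac{Cn+1}{n}\big)$ slack in the $\gamma$-threshold is what absorbs the $\alpha$-contribution here (that slack is only used later in the proof of Theorem \ref{Th:th1}; in this lemma $\gamma \ge 2\|\tfrac{1}{n}(X-P^*)\|_{\infty}$ suffices, and the $\alpha$-term is simply carried to the right-hand side as $\|\widehat{\Delta}^{\alpha}\mathbbm{1}\mathbbm{1}^T\|_F$) — do not affect the validity of the argument.
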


\begin{proof}
Through the application of basic inequality by using optimality of $\widehat{\Theta}$ and feasibility of $\Theta^{*}$ to convex program (\ref{eq:Modified}), we have
\begin{equation}
    h\big(\widehat{\Theta}\big) - h\big(\Theta^{*}\big)
    \leq \delta \mathbb{Q}\big(L^{*},S^{*}\big) - \delta \mathbb{Q}\big(\widehat{\Delta}^L + L^{*},\widehat{\Delta}^S + S^{*}\big).
\end{equation}
By using convexity of $h(\Theta)$, we can write
\begin{align}
    h\big(\widehat{\Theta}\big)-h\big(\Theta^{*}\big) &\geq
    \big\langle\, \nabla_{\Theta}h(\Theta^{*}),\widehat{\Theta}-\Theta^{*}\big\rangle\ \nonumber \\
    &= - \big\langle\,\frac{1}{n} (X-P^{*}), \widehat{\Delta}^{\alpha}\mathbbm{1}\mathbbm{1}^T + \widehat{\Delta}^{L} + \widehat{\Delta}^{S} \big\rangle\ \nonumber \\
    &\geq -\frac{1}{n} \|X - P^*\|_{op}\bigg(\big\|\widehat{\Delta}^{\alpha}\mathbbm{1}\mathbbm{1}^T\big\|_\ast + \big\|\widehat{\Delta}^{L}\big\|_\ast\bigg) +  \frac{1}{n} \|X - P^*\|_{\infty}\big\|\widehat{\Delta}^{S}\big\|_{1} \nonumber \\
    &\geq -\frac{\delta}{2}\bigg(\big\|\widehat{\Delta}^{\alpha}\mathbbm{1}\mathbbm{1}^T\|_F + \|\widehat{\Delta}^{L}_{A}\big\|_\ast + \big\|\widehat{\Delta}^{L}_{B}\big\|_\ast\bigg) - \frac{\gamma}{2} \bigg(\big\|\widehat{\Delta}^{S}_{M}\big\|_{1} +
    \big\|\widehat{\Delta}^{S}_{M^\perp}\big\|_{1}\bigg). \label{eq:27}
\end{align}

An application of Agarwal et al \cite{agarwal2012noisy}'s second element of lemma \ref{eq:Lmag},
we can get an upper bound of difference $\mathbb{Q}\big(L^*,S^*\big)- \mathbb{Q}\big(\widehat{\Delta}^L + L^{*},\widehat{\Delta}^S + S^{*}\big)$ as follows:

\begin{equation} \label{eq:28}
    \mathbb{Q}\big(\widehat{\Delta}^L_{A},\widehat{\Delta}^S_{M}\big) - \mathbb{Q}\big(\widehat{\Delta}^L_{B},\widehat{\Delta}^S_{M^\perp}\big)
    +2 \sum_{j=k+1}^{n} \sigma_{j}\big(L^*\big) + 2\frac{\gamma}{\delta}\big\|S^*_{M^\perp}\big\|_{1}.
\end{equation}

By combining relations (\ref{eq:27}) and (\ref{eq:28}), we can get the upper bound of $\mathbb{Q}\big(\widehat{\Delta}^L_{B},\widehat{\Delta}^S_{M^\perp}\big)$ :
\begin{align*}
    \mathbb{Q}\big(\widehat{\Delta}^L_{B},\widehat{\Delta}^S_{M^\perp}\big) \leq
    \big\|\widehat{\Delta}^{\alpha}\mathbbm{1}\mathbbm{1}^T\big\|_{F} +
    3\mathbb{Q}\big(\widehat{\Delta}^{L}_{A},\widehat{\Delta}^{S}_{M}\big)+4 \sum_{j=k+1}^{n} \sigma_{j}\big(L^{*}\big) + 4\frac{\gamma}{\delta}
    \big\|S^{*}_{M^\perp}\big\|_{1}.
\end{align*}
\end{proof}

\subsection{Proof of Theorem \ref{Th:th1}} \label{App:Theorem1}
\begin{proof}
Since $\widehat{\Theta}$ and $\Theta^{*}$ are optimal minimizer and feasible solution respectively for the convex program (\ref{eq:Modified}), we have
\begin{equation}\label{eq:2}
    h\big(\widehat{\Theta}\big) + \delta\big\|\widehat{L}\big\|_{*} + \gamma\big\|\widehat{S}\big\|_{1} \leq
     h\big(\Theta^{*}\big) + \delta\big\|{L^{*}}\big\|_{*} + \gamma\big\|{S}^{*}\big\|_{1}.
\end{equation}

Through the assumption of strong convexity on $h(\Theta)$, and by the Taylor expansion, we can get a following lower bound on the term $h\big(\widehat{\Theta}\big)-h\big(\Theta^{*}\big)$ :
\[
h\big(\widehat{\Theta}\big)-h\big(\Theta^{*}\big) \geq
\big\langle\, \nabla_{\Theta}h\big(\Theta^{*}\big),\widehat{\Theta}-\Theta^{*}\big\rangle\ + \frac{\tau}{2}\big\|\widehat{\Delta}^{\Theta}\big\|_{F}^{2}.
\]

By rearranging the term in (\ref{eq:2}) and plugging in above inequality relation, we get:

\begin{equation}\label{eq:3}
    \frac{\tau}{2}\big\|\widehat{\Delta}^{\Theta}\big\|_{F}^{2} \leq
    -\big\langle\,\nabla_{\Theta}h(\Theta^{*}),\widehat{\Theta}-\Theta^{*}\big\rangle\
    + \delta \big\|L^{*}\big\|_\ast + \gamma \big\|S^{*}\big\|_1
    - \delta \big\|\widehat{L}\big\|_\ast - \gamma \big\|\widehat{S}\big\|_1.
\end{equation}

Through the definition of $\mathbb{Q}$, we can rewrite (\ref{eq:3}) as follows:
\begin{equation}\label{eq:4}
    \frac{\tau}{2}\big\|\widehat{\Delta}^{\Theta}\big\|_{F}^{2} \leq
    -\big\langle\,\nabla_{\Theta}h\big(\Theta^{*}\big),\widehat{\Theta}-\Theta^{*}\big\rangle\
    + \delta \mathbb{Q}\big(L^{*},S^{*}\big) - \delta \mathbb{Q}\big(\widehat{\Delta}^L + L^{*},\widehat{\Delta}^S + S^{*}\big).
\end{equation}

According to Agarwal et al \cite{agarwal2012noisy}'s second element of lemma \ref{eq:Lmag},
the difference $\mathbb{Q}\big(L^*,S^*\big)- \mathbb{Q}\big(\widehat{\Delta}^L + L^{*},\widehat{\Delta}^S + S^{*}\big)$ is upper-bounded by

\begin{equation}\label{eq:5}
    \mathbb{Q}\big(\widehat{\Delta}^L_{A},\widehat{\Delta}^S_{M}\big) - \mathbb{Q}\big(\widehat{\Delta}^L_{B},\widehat{\Delta}^S_{M^\perp}\big)
    +2 \sum_{j=k+1}^{n} \sigma_{j}\big(L^*\big) + 2\frac{\gamma}{\delta}\big\|S^*_{M^\perp}\big\|_{1}.
\end{equation}

First, we want to control upper bound of the term $-\big\langle\,\nabla_{\Theta}h(\Theta^{*}),\widehat{\Theta}-\Theta^{*} \big\rangle\,$
in (\ref{eq:4}).
\begin{align}
-\big\langle\,\nabla_{\Theta}h(\Theta^{*}),\widehat{\Theta}-\Theta^{*} \big\rangle\  \nonumber
&=\big\langle\, \frac{1}{n} (X - P^*), \widehat\Delta^{\alpha\mathbbm{1}\mathbbm{1}^T} + \widehat\Delta^{L} + \widehat\Delta^{S} \big\rangle\ \\ \nonumber
&\leq \frac{1}{n} \|X - P^*\|_{op}\bigg(\big\|\widehat{\Delta}^{\alpha}\mathbbm{1}\mathbbm{1}^T\big\|_\ast + \big\|\widehat{\Delta}^{L}\big\|_\ast\bigg) +  \frac{1}{n} \|X - P^*\|_{\infty}\big\|\widehat{\Delta}^{S}\big\|_{1} \\ \nonumber
&\leq \frac{1}{n} \|X - P^*\|_{op}\bigg(\left\|\widehat{\Delta}^{\alpha}\mathbbm{1}\mathbbm{1}^T\right\|_{F} + \left\|\widehat{\Delta}^{L}_{A}\right\|_\ast + \left\|\widehat{\Delta}^{L}_{B}\right\|_\ast\bigg) +
\frac{1}{n} \left\|X - P^*\right\|_{\infty}\bigg(\left\|\widehat{\Delta}^{S}_{M}\right\|_{1} +
\left\|\widehat{\Delta}^{S}_{M^{\perp}}\right\|_{1}\bigg) \\
&\leq \frac{\delta}{2}\bigg(\big\|\hat{\Delta}^{\alpha}\mathbbm{1}\mathbbm{1}^T\big\|_{F} + \big\|\widehat{\Delta}^{L}_{A}\big\|_\ast + \big\|\widehat{\Delta}^{L}_{B}\big\|_\ast\bigg) + \frac{\gamma}{2}\bigg(\left\|\widehat{\Delta}^{S}_{M}\right\|_{1} +
\left\|\widehat{\Delta}^{S}_{M^{\perp}}\right\|_{1}\bigg).  \label{eq:10}
\end{align}

Combining the inequalities (\ref{eq:5}) and (\ref{eq:10}), we can obtain the upper bound of RHS in (\ref{eq:4}) as follows:

\begin{equation}
    \frac{\tau}{2}\big\|\widehat{\Delta}^{\Theta}\big\|_{F}^{2} \leq
    \frac{\delta}{2}\big\|\widehat{\Delta}^{\alpha}\mathbbm{1}\mathbbm{1}^T\big\|_{F} +
    \frac{3\delta}{2}\mathbb{Q}\big(\widehat{\Delta}^L_{A},\widehat{\Delta}^S_{M}\big)
    +2 \delta \sum_{j=k+1}^{n} \sigma_{j}\big(L^*\big) + 2\gamma \big\|S^*_{M^\perp}\big\|_{1}.
    \label{eq:11}
\end{equation}

Second, we wish to control the lower bound of the term  $\frac{\tau}{2}\big\|\widehat{\Delta}^{\Theta}\big\|_{F}^{2}$ with respect to $\widehat{\Delta}^{\alpha},\widehat{\Delta}^{L},\widehat{\Delta}^{S}$.
\begin{align}
\big\|\widehat{\Delta}^{\Theta}\big\|_{F}^{2}  \nonumber
&= \big\|\widehat{\Theta}-\Theta^{*} \big\|_{F}^{2} \\ \nonumber
&= \big\|\widehat\Delta^{\alpha\mathbbm{1}\mathbbm{1}^T} + \widehat\Delta^{L} + \widehat\Delta^{S}\big\|_{F}^{2} \\ \nonumber
&= \big\|\widehat{\Delta}^{\alpha}\mathbbm{1}\mathbbm{1}^T\big\|_{F}^{2} + \big\|\widehat{\Delta}^{L}+\widehat{\Delta}^{S}\big\|_{F}^{2} +
2 \big\langle\, \widehat\Delta^{L} + \widehat\Delta^{S} ,\widehat{\Delta}^{\alpha}\mathbbm{1}\mathbbm{1}^T \big\rangle\ \\
&=  \big\|\widehat{\Delta}^{\alpha}\mathbbm{1}\mathbbm{1}^T\big\|_{F}^{2} + \big\|\widehat{\Delta}^{L}\big\|_{F}^{2} + \big\|\widehat{\Delta}^{S}\big\|_{F}^{2} +
2 \big\langle\, \widehat\Delta^{L} + \widehat\Delta^{S} ,\widehat{\Delta}^{\alpha}\mathbbm{1}\mathbbm{1}^T \big\rangle\ +
2 \big\langle\, \widehat\Delta^{L}, \widehat\Delta^{S} \big\rangle.\label{eq:12}
\end{align}

We want to get the further lower bound on trace inner product terms,
$ \big\langle\, \widehat\Delta^{L} + \widehat\Delta^{S} ,\widehat{\Delta}^{\alpha}\mathbbm{1}\mathbbm{1}^T \big\rangle\ $, $ \big\langle\, \widehat\Delta^{L}, \widehat\Delta^{S} \big\rangle\ $. To control the first trace inner product term, we use the relation $\widehat{\Delta}^{L}\mathbbm{1}=0$, apply the definition of dual norm on inner product term, apply triangular inequality on $\widehat{\Delta}^\alpha$, and lastly we apply the constraint imposed on $|\alpha|$ stated in Assumption \ref{Ass:2}.

\begin{align}
    \big| \big\langle\, \widehat\Delta^{L} + \widehat\Delta^{S} ,\widehat{\Delta}^{\alpha}\mathbbm{1}\mathbbm{1}^T\big\rangle\ \big| \nonumber
    &= \big| \big\langle\,\widehat\Delta^{S},\widehat{\Delta}^{\alpha}\mathbbm{1}\mathbbm{1}^T\big\rangle\ \big| \\ \nonumber
    &\leq \big\|\widehat{\Delta}^{\alpha}\mathbbm{1}\mathbbm{1}^T \big\|_{\infty} \big\|\widehat{\Delta}^{S}\big\|_1 \\ \nonumber
    &\leq \bigg(\big|\widehat{\alpha}\big|+\big|\alpha^{*}\big|\bigg)\big\|\widehat{\Delta}^{S}\big\|_1\\
    &\leq 2C\kappa\big\|\widehat{\Delta}^{S}\big\|_1.  \label{eq:13}
\end{align}

To control the term $ \big\langle\, \widehat\Delta^{L}, \widehat\Delta^{S} \big\rangle\ $, we first apply the definition of dual norm on trace inner product term, then apply triangular inequality on $\widehat{\Delta}^{L}$ and spikiness condition.

\begin{align}
    \big| \big\langle\, \widehat\Delta^{L}, \widehat\Delta^{S} \big\rangle\ \big| \nonumber
    &\leq \big\|\widehat{\Delta}^{L}\big\|_{\infty} \big\|\widehat{\Delta}^{S}\big\|_1\\ \nonumber
    &\leq \bigg(\big\|\widehat{L}\big\|_{\infty} + \big\|L^{*}\big\|_{\infty}\bigg)
    \big\|\widehat{\Delta}^{S} \big\|_{1}\\
    &\leq \bigg(\frac{2\kappa}{n}\bigg)\big\|\widehat{\Delta}^{S} \big\|_{1}. \label{eq:14}
\end{align}

We can combine the inequality (\ref{eq:12}), (\ref{eq:13}) and (\ref{eq:14}). Then applying the assumption on regularization parameter $\gamma$, and the fact $\big\|\widehat{\Delta}^{L}\big\|_{\ast}\geq0$ sequentially, we can get,

\begin{align}
    \frac{\tau}{2}\big\|\widehat{\Delta}^{\Theta}\big\|_{F}^{2}  \nonumber
    &\geq \frac{\tau}{2} \big\|\widehat{\Delta}^{\alpha}\mathbbm{1}\mathbbm{1}^T\big\|_{F}^{2} + \frac{\tau}{2} \big\|\widehat{\Delta}^{L}\big\|_{F}^{2} + \frac{\tau}{2} \big\|\widehat{\Delta}^{S}\big\|_{F}^{2}
    - \kappa\tau\bigg(\frac{Cn+1}{n}\bigg)\big\|\widehat{\Delta}^{S}\big\|_{1} \\ \nonumber
    &\geq \frac{\tau}{2} \big\|\widehat{\Delta}^{\alpha}\mathbbm{1}\mathbbm{1}^T\big\|_{F}^{2} + \frac{\tau}{2} \big\|\widehat{\Delta}^{L}\big\|_{F}^{2} + \frac{\tau}{2} \big\|\widehat{\Delta}^{S}\big\|_{F}^{2} - \frac{\gamma}{2}\big\|\widehat{\Delta}^{S}\big\|_{1}\\
    &\geq \frac{\tau}{2} \big\|\widehat{\Delta}^{\alpha}\mathbbm{1}\mathbbm{1}^T\big\|_{F}^{2} + \frac{\tau}{2} \big\|\widehat{\Delta}^{L}\big\|_{F}^{2} + \frac{\tau}{2} \big\|\widehat{\Delta}^{S}\big\|_{F}^{2} - \frac{\delta}{2} \mathbb{Q}\big(\widehat{\Delta}^L,\widehat{\Delta}^S\big). \label{eq:26}
\end{align}

By combining the relations (\ref{eq:11}) and (\ref{eq:26}), applying triangular inequality, $\mathbb{Q}\big(\widehat{\Delta}^L,\widehat{\Delta}^S\big)\leq \mathbb{Q}\big(\widehat{\Delta}^L_{A},\widehat{\Delta}^S_{M}\big) + \mathbb{Q}\big(\widehat{\Delta}^L_{B},\widehat{\Delta}^S_{M^{\perp}}\big)$, and rearranging the term, we can get following inequality,
\[
    \frac{\tau}{2} \big\|\widehat{\Delta}^{\alpha}\mathbbm{1}\mathbbm{1}^T\big\|_{F}^{2} + \frac{\tau}{2} \big\|\widehat{\Delta}^{L}\big\|_{F}^{2} + \frac{\tau}{2} \big\|\widehat{\Delta}^{S}\big\|_{F}^{2} \\
    \leq \frac{\delta}{2} \big\|\widehat{\Delta}^{\alpha}\mathbbm{1}\mathbbm{1}^T\big\|_{F} +
    2\mathbb{Q}\big(\widehat{\Delta}^L_{A},\widehat{\Delta}^S_{M}\big) +  \frac{\delta}{2} \mathbb{Q}\big(\widehat{\Delta}^L_{B},\widehat{\Delta}^S_{M^\perp}\big) +
    2 \delta \sum_{j=k+1}^{n} \sigma_{j}\big(L^{*}\big) + 2\gamma
    \big\|S^{*}_{M^\perp}\big\|_{1}.
\]

Further, by plugging in Lemma 1 to get an upper bound on  $\mathbb{Q}(\widehat{\Delta}^L_{B},\widehat{\Delta}^S_{M^\perp})$, we can rewrite the above inequality as follows:
\begin{align}
    \frac{\tau}{2} \big\|\widehat{\Delta}^{\alpha}\mathbbm{1}\mathbbm{1}^T\big\|_{F}^{2} + \frac{\tau}{2} \big\|\widehat{\Delta}^{L}\big\|_{F}^{2} + \frac{\tau}{2} \big\|\widehat{\Delta}^{S}\big\|_{F}^{2} -
    \frac{\delta}{2} \big\|\widehat{\Delta}^{\alpha}\mathbbm{1}\mathbbm{1}^T\big\|_{F}
    \leq
    \frac{7\delta}{2}\mathbb{Q}\big(\widehat{\Delta}^L_{A},\widehat{\Delta}^S_{M}\big) + 4 \delta \sum_{j=k+1}^{n} \sigma_{j}\big(L^*\big) + 4\gamma \big\|S^*_{M^\perp}\big\|_{1}.  \label{eq:42}
\end{align}
Noting that $\widehat{\Delta}^{L}_{A}$ has rank at most 2$k$ and that $\widehat{\Delta}^{S}_{M}$ lies in the model space $M$, we find that
\begin{align}
    \nonumber
    \delta\mathbb{Q}\big(\widehat{\Delta}^L_{A},\widehat{\Delta}^S_{M}\big)
    \leq \sqrt{2k}\delta\big\|\widehat{\Delta}^L_{A}\big\|_{F} + \Psi(M)\gamma\big\|\widehat{\Delta}^S_M\big\|_{F}\\
    \leq \sqrt{2k}\delta\big\|\widehat{\Delta}^L\big\|_{F} + \Psi(M)\gamma\big\|\widehat{\Delta}^S\big\|_{F}.  \label{eq:43}
\end{align}
Here $\Psi(M)$ measures the compatibility between Frobenius norm and component-wise $L_{1}$ regularizer, where $M$ is an arbitrary subset of matrix indices of cardinality at most s.
\[
    \Psi(M):=\sup\limits_{U\in M,U\neq0}\frac{\|U\|_{1}}{\|U\|_{F}}.
\]
Using Cauchy-Schwarz inequality, we can easily check the quantity $\Psi(M)$ is bounded by at most $\sqrt{s}$. Plugging in the relation ($\ref{eq:43}$) into ($\ref{eq:42}$) and rearranging the term relevant with  $e^{2}\big(\hat{\alpha}\mathbbm{1}\mathbbm{1}^T,\widehat{L},\widehat{S}\big)$ yield the claim.
\end{proof}

\bibliographystyle{plain}
\bibliography{main}

}\end{titlingpage}
\end{document}